\newcommand{\calL}{\mathcal{L}}
\newcommand{\calD}{\mathcal{D}}
\newtheorem{mydef}{Definition}
\newtheorem{theorem}{Theorem}
\newtheorem{lemma}{Lemma}
\title{\textsc{XRand}: Differentially Private Defense against Explanation-Guided Attacks}
\author {
    Truc Nguyen,\equalcontrib\textsuperscript{\rm 1}
    Phung Lai,\equalcontrib\textsuperscript{\rm 2}
    NhatHai Phan,\textsuperscript{\rm 2}
    My T. Thai \textsuperscript{\rm 1}\thanks{Corresponding author}
}
\begin{document}

\maketitle

\begin{abstract}
    Recent development in the field of explainable artificial intelligence (XAI) has helped improve trust in Machine-Learning-as-a-Service (MLaaS) systems, in which an explanation is provided together with the model prediction in response to each query. However, XAI also opens a door for adversaries to gain insights into the black-box models in MLaaS, thereby 
    making the models more vulnerable to several attacks. For example, feature-based explanations (e.g., SHAP) could expose the top important features that a black-box model focuses on. Such disclosure has been exploited to craft effective backdoor triggers against malware classifiers. To address this trade-off, we introduce a new concept of achieving local differential privacy (LDP) in the explanations, and from that we establish a defense, called \textsc{XRand}, against such attacks. We show that our mechanism restricts the information that the adversary can learn about the top important features, while maintaining the faithfulness of the explanations. 
\end{abstract}

\section{Introduction}
Over decades, successes in machine learning (ML) have promoted a strong wave of AI applications that deliver vast benefits to a diverse range of fields. Unfortunately, due to their complexity, ML models suffer from opacity in terms of explainability, which reduces the trust in and the verifiability of the decisions made by the models. To meet the necessity of transparent decision making, model-agnostic explainers have been developed to help create effective, more human-understandable AI systems, such as LIME \citep{ribeiro2016should} and SHAP \citep{lundberg2017unified}, among many others \citep{sundararajan2017axiomatic, selvaraju2017grad, vu2020pgm, vu2022neucept, ying2019gnnexplainer,shrikumar2017learning, vu2021c}.

In MLaaS systems, a customer can build an ML model by uploading their data or crowdsourcing data, and executing an ML training algorithm. Then, the model is deployed in the cloud where users can receive the model predictions for input queries. MLaaS assumes black-box models as the end-users have no knowledge about the algorithm or internal information about the underlying ML models. Several proposals advocate for deploying model explanations in the cloud, such that a predicted label and an explanation are returned for each query to provide transparency for end-users. In practice, such an explainable MLaaS system model has been developed by many cloud providers \cite{minthigpen,ibm}, with applications in both industries \cite{fico} and academic research \cite{shokri2021privacy,milli2019model}. 

Despite the great potential of those explainers to improve the transparency and understanding of ML models in MLaaS, they open a trade-off in terms of security. Specifically, they allow adversaries to gain insights into black-box models, essentially uncovering certain aspects of the models that make them more vulnerable. 
Such an attack vector has recently been exploited by the research community to conduct several \textit{explanation-guided attacks}  \cite{shokri2021privacy,milli2019model,miura2021megex,zhao2021exploiting,severi2021explanation}. It was shown that an explainer may expose the top important features on which a black-box model is focusing, by aggregating over the explanations of multiple samples. An example of utilizing such information is the recent highly effective explanation-guided backdoor attack (XBA) against malware classifiers investigated by  \citep{severi2021explanation}. The authors suggest that SHAP can be used to extract the top-$k$ goodware-oriented features. The attacker then selects a combination of those features and their values 
for crafting a trigger; and injects trigger-embedded goodware samples into the training dataset of a malware classifier, with an aim of changing the prediction of malware samples embedded with the same trigger at inference time. 

\noindent\textbf{Defense Challenges.} To prevent an adversary from exploiting the explanations, we need to control the information leaked through them, especially the top-$k$ features. Since the explanation on each queried sample is returned to the end users, a viable defense is to randomize the explanation such that it is difficult for attackers to distinguish top-$k$ features while maintaining valuable explainability for the decision-making process. A well-applied technique to achieve this is preserving local differential privacy (LDP) \citep{erlingsson2014rappor,wang2017locally} on the explanations. 
However, existing LDP-based approaches \cite{erlingsson2014rappor,xiong2020comprehensive,sun2020ldp,zhao2020local} are not designed to protect the top-$k$ features aggregated over the returned explanations on queried data samples. 
Therefore, optimizing the trade-off between defenses against explanation-guided attacks and model explainability is 
an open problem.



\paragraph{Contributions.} (1) We introduce a new concept of achieving LDP in model explanations that simultaneously protects the top-$k$ features from being exploited by attackers while maintaining the faithfulness of the explanations. Based on this principle, we propose a defense against explanation-guided attacks on MLaaS, called \textsc{XRand}, by devising a novel two-step LDP-preserving mechanism. First, at the aggregated explanation, we incorporate the explanation loss into the randomized probabilities in LDP to make top-$k$ features indistinguishable to the attackers. Second, at the sample-level explanation, guided by the first step, we minimize the explanation loss on each sample while keeping the features at the aggregated explanation intact.
(2) Then, we theoretically analyze the robustness of our defense against the XBA in MLaaS by establishing two certified robustness bounds in both training time and inference time. 
(3) Finally, we evaluate the effectiveness of \textsc{XRand} in mitigating the XBA on cloud-hosted malware classifiers.

\paragraph{Organization.} The remainder of the paper is structured in the following manner. Section \ref{sec:prelim} presents some background knowledge for our paper. Section \ref{sec:mlaas} discusses the explanation-guided attacks against MLaaS and establishes the security model for our defense. Our defense, \textsc{XRand}, is introduced in Section \ref{sec:def} where its certified robustness bounds are presented in Section \ref{sec:rob}. Experimental evaluation of  our solution is given in Section \ref{sec:exp}. Section \ref{sec:rel} discusses related work and Section \ref{sec:con} provides some concluding remarks.

\section{Preliminaries} \label{sec:prelim}


\paragraph{Local explainers.} The goal of model explanations is to capture the importance of each feature value of a given point of interest with respect to the decision made by the classifier and which class it is pushing that decision toward. Given a sample $x \in \mathbb{R}^d$ where $x_j$ denotes the $j^{th}$ feature of the sample, let $f$ be a model function in which $f(x)$ is the probability that $x$ belongs to a certain class. An explanation of the model's output $f(x)$ takes the form of an explanation vector $w_x \in \mathbb{R}^d$ where the $j^{th}$ element of $w_x$ denotes the degree to which the feature $x_j$ influences the model's decision. In general, higher values of $w_{x_j}$ imply a higher impact.



Perturbation-based explainers, such as SHAP \cite{lundberg2017unified}, obtain an explanation vector $w_x$ for $x$ via training a surrogate model of the form $g(x) = w_{x_0} + \sum_{j=1}^d w_{x_j} x_j$ by minimizing a loss function $\calL(f,g)$ that measures how unfaithful $g$ is in approximating $f$. 

\begin{itemize}
    \item \textit{\underline {Sample-level explanation.}} In the context of this paper, we refer to $w_x$ as a sample-level explanation. 
    \item \textit{\underline {Aggregated explanation.}}
We denote an aggregated explanation $w$ as the sum of explanation vectors across samples in a certain set $\mathcal{X}$, i.e., $w_{\mathcal{X}} = \sum_{x\in \mathcal{X}} w_x$. When $\mathcal{X}$ is clear from the context, we shall use a shorter notation $w$.
\end{itemize}

\paragraph{Local Differential Privacy (LDP).} LDP is one of the state-of-the-arts and provable approaches to achieve individual data privacy. LDP-preserving mechanisms \citep{erlingsson2014rappor,wang2017locally,bassily2015local,duchi2018minimax,acharya2019hadamard} generally build on the ideas of randomized response (RR) \citep{warner1965randomized}. 

\begin{mydef}{$\varepsilon$-LDP.} A randomized algorithm $\mathcal{A}$
satisfies $\varepsilon$-LDP, if for any two inputs $x$ and $x'$, and for all possible outputs $\mathcal{O} \in Range(\mathcal{A})$, we have:
$Pr[\mathcal{A}(x) = \mathcal{O}] \leq e^{\varepsilon} Pr[\mathcal{A}(x') = \mathcal{O}]$,
where $\varepsilon$ is a privacy budget and $Range(\mathcal{A})$ denotes every possible output of  $\mathcal{A}$. 
\label{Different Privacy} 
\end{mydef}

The privacy budget $\varepsilon$ controls the amount by which the distributions induced by inputs $x$ and $x'$ may differ. A smaller $\varepsilon$ enforces a stronger privacy guarantee. 

\section{XAI-guided Attack Against MLaaS} \label{sec:mlaas}
We discuss how XAI can be used to gain insights into MLaaS models, and establish the threat model for our work.

\subsection{Exposing MLaaS via XAI}
From a security viewpoint, releasing additional information about a model's mechanism is a perilous prospect. As a function of the model that is trained on a private dataset, an explanation may unintentionally disclose critical information about the training set, more than what is needed to offer a useful interpretation. Moreover, the explanations may also expose the internal mechanism of the black-box models. 
For example, first, the behavior of explanations varies based on whether the query sample was a member of the training dataset, making the model vulnerable to membership inference attacks \citep{shokri2021privacy}. 
Second, the explanations can be coupled with the predictions to improve the performance of generative models which, in turn, strengthens some model inversion attacks \citep{zhao2021exploiting}. Furthermore, releasing the explanations exposes how the black-box model acts upon an input sample, essentially giving up more information about its inner workings for each query, hence, model extractions attacks can be carried out with far fewer queries, as discussed in \citep{milli2019model,miura2021megex}.




Finally, \cite{severi2021explanation} argues that the explanations allow an adversary to gain insight into a model’s decision boundary in a generic, model-agnostic way. The SHAP values can be considered as an approximation of the confidence of the decision boundary along each feature dimension. Hence, features with SHAP values that are near zero infer low-confidence areas of the decision boundary. On the other hand, features with positive SHAP values imply that they strongly contribute to the decision made by the model. As a result, it provides us with an indication of the overall orientation for each feature, thereby exposing how the model rates the importance of each feature.

\subsection{XAI-guided Backdoor Attack against MLaaS}

The XBA on malware classifiers \cite{severi2021explanation} suggests that explanations make the model vulnerable to backdoor attacks, as they reveal the top important features. Thus, it is natural to mount this XBA against a black-box model in MLaaS where an explanation is returned for each user query.



\begin{figure}
    \centering
    \includegraphics[width=0.7\linewidth]{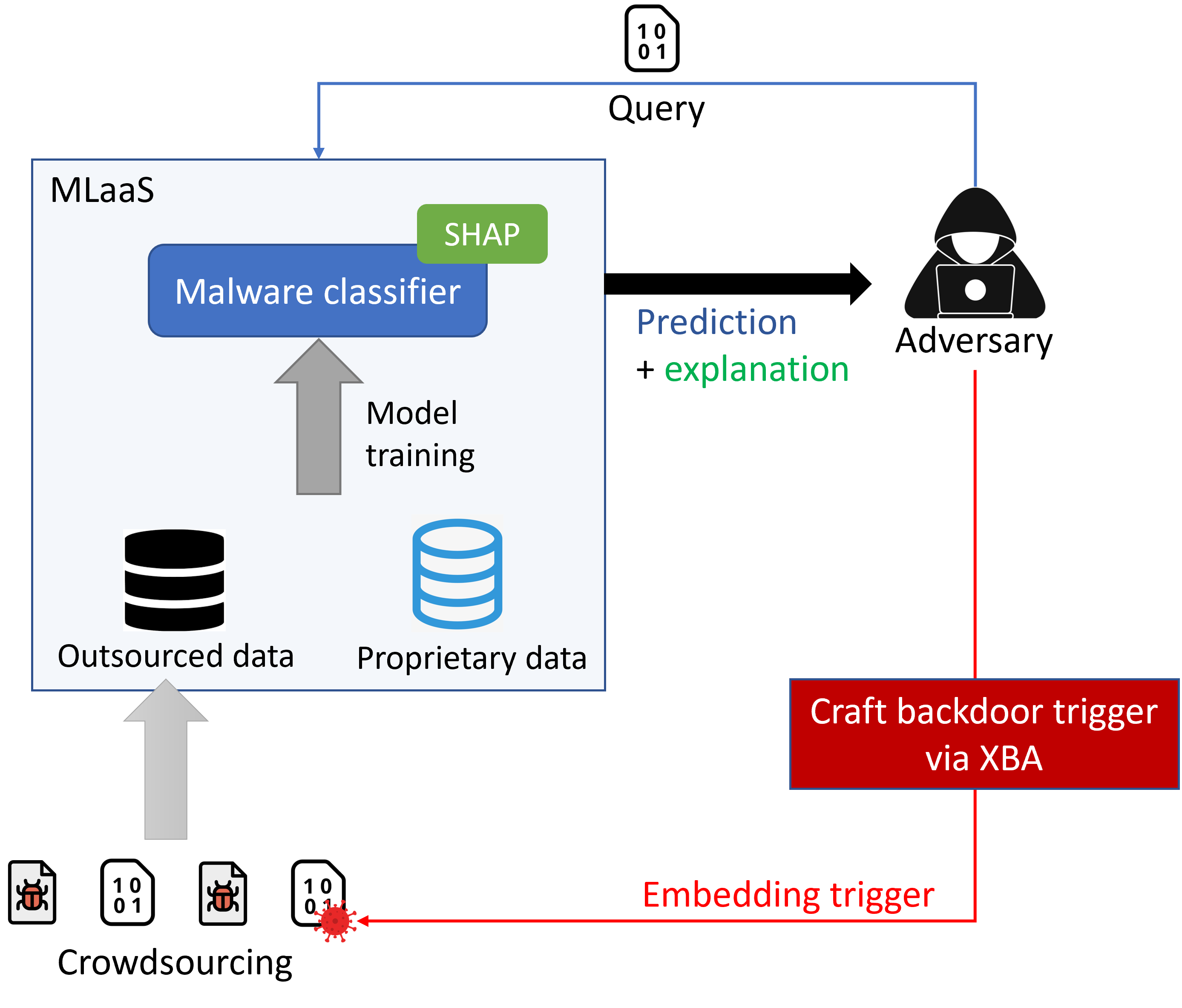}
    \caption{System model of a cloud-hosted malware classifier that leverages crowdsourced data for model training. 
    }
    \label{fig:overview}
\end{figure}

\paragraph{System Model.} Fig. \ref{fig:overview} illustrates the system model for our work. We consider an MLaaS system where a malware classifier is deployed on a cloud-based platform. For training, the system crowdsources threat samples via user-submitted binaries to assemble a set of outsourced data. 
This set of outsourced data is then combined with a set of proprietary data to construct the training data to train the malware classifier. 

We denote $\mathcal{D} = \{ (x_n,y_n) \}_{n=1}^N$ as the set of proprietary training data. The dataset contains sample $x_n\in \mathbb{R}^d$ and its ground-truth label $y_n \in \{0,1\}$, where $y_n = 0$ denotes a goodware sample, and $y_n = 1$ denotes a malware sample. On input $x$, the model $f : \mathbb{R}^d \rightarrow \mathbb{R}$ outputs the score $f(x) \in [0, 1]$. This score is then compared with a threshold of 0.5 to obtain the predicted label for $x$.

During inference time, given a query containing a binary sample $x$, the system returns the predicted label 
with a SHAP explanation $w_x$ for the decision ($w_x\in \mathbb{R}^d$). We consider an adversary who plays the role of a user in this system and can send queries at his discretion. The adversary exploits the returned explanations to craft backdoor triggers that will be injected to the system via the crowdsourcing process, thereby poisoning the outsourced data.

\paragraph{Threat Model.} The attacker's goal is to alter the training procedure by injecting poisoned samples into the training data, generating poisoned training data such that the resulting backdoored classifier differs from a clean classifier. An ideal backdoored classifier has the same response to a clean input as the clean classifier, but it gives an attack-chosen prediction when the input is embedded with the trigger.


Our defense assumes a strong adversary such that he can tamper with the training dataset at his discretion without major constraints. To prevent the adversary from setting arbitrary values for the features in the trigger, the set of values that can be used is limited to the ones that exist in the dataset. This threat model promotes a defense under worst-case scenarios from the perspective of the defender.

\paragraph{Crafting backdoor triggers.} To craft a backdoor trigger in XBA, the adversary tries to obtain the top goodware-oriented feature by querying classifier $f$ with samples $\{x\}_{x\in A}$ from their dataset $A$ and obtaining the SHAP explanation $w_x$ for each of them. The sum of the SHAP values across all queried samples $w_A = \sum_{x\in A} w_x$ approximately indicates the importance of each feature, and whether it is goodware- or malware-oriented. From that, the attacker greedily selects a combination of the most goodware-oriented features to create the trigger \citep{severi2021explanation}.

\section{\textsc{XRand} -- Local DP Defense} \label{sec:def}

This section describes our defense, \textsc{XRand}, a novel two-step explanation-guided randomized response (RR) mechanism.
Our idea is to incorporate the model explainability into the randomization probabilities in \textsc{XRand} to guide the aggregated explanation while minimizing the difference between the perturbed explanation's surrogate model $g'(x)$ and the model $f(x)$ at the sample-level explanation. We call the difference between $g'(x)$ and $f(x)$ an explanation loss $\mathcal{L}$, quantified as follows: 
\begin{equation} \label{loss 2}
    \calL = \sum_{z\in N(x)} \left(g'(z) - f(z)\right)^2 \exp \left(-\frac{\|z-x\|^2}{\sigma^2}\right) 
\end{equation}
where $x$ is an input sample, its neighborhood $N(x)$ is generated by the explainer's sampling method. 
This function captures the difference between the modified explainer's linear surrogate model $g'(x)$ and the model $f(x)$, essentially measuring how unfaithful $g'$ is in approximating $f$.






To defend against explanation-guided attacks that utilize the top-$k$ features of the aggregated explanation (e.g., XBA exploits the top-$k$ goodware-oriented features), our idea is to 
randomly disorder some top-$k$ features 
under LDP guarantees, thereby protecting the privacy of those features. 
This raises the following question: \textit{What top-$k$ features and which data samples should be randomized to optimize the explainability of data samples while guaranteeing that the attackers cannot infer the top-$k$ features?}



\begin{algorithm}[t] 
\caption{\textsc{XRand}: Explanation-guided RR mechanism 
}\label{alg:XRand}
\KwIn{model $f$, dataset $\calD$,
aggregated explanation $w$,
$\varepsilon$, $k$, $\tau$, test sample $x$ }
\KwOut{$\mathbb{S}$, $\varepsilon$-LDP $w'_x$}
\begin{algorithmic}[1]
\STATE \textit{\textbf{Step 1 - At aggregated explanation:}}
\begin{ALC@g}
\FOR{ $x_n \in \calD$}
\STATE  Compute $\calL(x_n)$ \# using Eq.~\ref{loss 2}
\FOR{$i \in [1,k]$, $j \in [k+1, k+\tau]$}
\STATE Compute $\calL(x_n)(i,j)$ \# using Eq.~\ref{loss 2}
\STATE Compute $ \Delta_{\calL}(i,j)$ \# using Eq.~\ref{delta L}
\ENDFOR
\ENDFOR
\STATE \textbf{Randomizing $w$}: \\$w' \gets \textsc{XRand} (w, \varepsilon, k, \tau, \Delta_{\calL}(i,j))$ \# using Eq. \ref{globalRR}
\STATE \textbf{Return} $\mathbb{S}$
\end{ALC@g}
\STATE \textit{\textbf{Step 2 - At sample-level explanation:}}
\begin{ALC@g}
\STATE $w_x \gets$ SHAP explanation for $x$
\STATE $w'_x \gets$ Solve the optimization problem in (\ref{optimization the loss L2}) 
\STATE \textbf{Return} $w'_x$
\end{ALC@g}
\end{algorithmic} 
\end{algorithm}

\paragraph{Algorithm Overview.} To answer this question, we first integrate the explanation loss caused by potential changes of features in the aggregated explanation into the randomized probabilities to adaptively randomize each feature in the top-$k$. Then, we minimize the explanation loss on each sample while ensuring the order of the features at the aggregated explanation follows the results of the first step. By doing so, we are able to optimize the trade-off between the model explainability and the privacy budget $\varepsilon$ used in \textsc{XRand}, as verified both theoretically (Section \ref{ssec:tradeoff}) and experimentally (Section \ref{sec:exp}). The pseudo-code of \textsc{XRand} is shown in Alg. \ref{alg:XRand}.

  





\subsection{LDP-preserving Explanations}

\paragraph{Step 1 (Alg.~\ref{alg:XRand}, lines 1-10).} We first compute the aggregated explanation $w$ over the samples of the proprietary dataset $w = \sum_{x\in \calD} w_x$. Then we sort $w$ in descending order and retain a mapping $v: \mathbb{N} \rightarrow \mathbb{N}$ from the sorted indices to the original indices. Given that $\tau$ is a predefined threshold to control the range of out-of-top-$k$ features that some of top-$k$ features can swap with, and $\beta$ is a parameter bounded in Theorem \ref{theorem-beta-bound} under a privacy budget $\varepsilon$, \textsc{XRand} defines the probability of flipping a top-$k$ feature $i$ to an out-of-top-$k$ feature $j$ as follows:
\begin{align}
\nonumber &\forall i \in [1, k], j \in [k+1,k+\tau], \tau \ge k:\\
& i = \left \{
  \begin{aligned}
    & i, \text{ with probability } p_i = \frac{\exp({\beta})}{\exp({\beta}) + \tau -1}, \\
    & j, \text{ with probability } q_{i,j} =  \frac{\tau -1}{\exp({\beta})  + \tau -1} q_j
  \end{aligned} \right. 
  \label{globalRR}
\end{align}
where $q_{j} = \frac{\exp(-\Delta_{\calL} (i,j))}{ \sum_{t \in [k+1,k+\tau]} \exp(-\Delta_{\calL}(i,t))}$ and $\Delta_{\calL}(i,j)$ is the aggregated changes of $\calL$ (Eq.~\ref{loss 2}) when flipping features $i$ and $j$, which is calculated as follows:
\begin{align}
    \Delta_{\calL}(i,j) = \frac{1}{N} \sum_{n=1}^N (| \calL(x_n) - \calL(x_n)(i,j) |)
    \label{delta L}
\end{align}
where $\calL(x_n)$ is the original loss $\calL$ of a sample $x_n\in\calD$ and $\calL(x_n)(i,j)$ is the loss  $\calL$ of the sample $x_n$ after flipping features $i$ and $j$ (Alg. \ref{alg:XRand}, lines 3,5).





After randomizing the aggregated explanation, we obtain the set $\mathbb{S}$ of features that need to be flipped in the aggregated explanation, as follows:
\begin{equation}
 \mathbb{S} = \{(i,j) | i \text{ and } j \text{ are flipped, } i \in [1,k], j \in [k+1, k+\tau] \}
\end{equation}

\paragraph{Step 2 (Alg.~\ref{alg:XRand}, lines 11-14).} For each input test sample $x$, we proceed with sample-level explanation for finding the noisy explanation $w_x'$. First, we generate a set of constraints $\mathbb{Q} = \{(i,j) | w_{x_i}' \leq w_{x_j}'\}$ that is \textit{sufficient} for $\mathbb{S}$. In particular, for each pair $(i,j)\in \mathbb{S}$, we add the following pairs to $\mathbb{Q}$:
$$(v(i+1), v(j)); (v(j), v(i - 1)); (v(i), v(j -1)); (v(j+1), v(i))$$
Given $w_x$ as the SHAP explanation of $x$, we aim to find $\phi \in \mathbb{R}^d$ such that $w_x' = w_x + \phi$ satisfies the constraints in $\mathbb{Q}$ while minimizing the loss $\calL$. To obtain $\phi$, we solve the following optimization problem:
{\small
\begin{align}  
\label{optimization the loss L2}
 \min_{\phi} &\sum_{z\in N(x)} \left((w_x + \phi)^T z - f(z)\right)^2 \exp \left(-\frac{\|z-x\|^2}{\sigma^2}\right) + \lambda \|\phi\| \\
\nonumber \mbox{s.t.\, }
& w_{x_i} + \phi_i \leq w_{x_j} + \phi_j, \quad \forall (i,j)\in \mathbb{Q} \\
\nonumber & \phi_i = 0 \quad \quad \qquad \qquad \qquad \forall i \notin \mathbb{Q}
\end{align}
}

\noindent where $\lambda$ is a regularization constant. 

The resulting noisy explanation will be $w_x' = w_x + \phi$. This problem is convex and can be solved by  convex optimization solvers 
\citep{kingma2014adam,diamond2016cvxpy}. 

\subsection{Privacy Guarantees of \textsc{XRand}}



To bound privacy loss of \textsc{XRand}, we need to bound $\beta$ in Eq. \ref{globalRR} such that the top-$k$ features in the explanation $w'$ preserves LDP, as follows:

\begin{theorem}\label{theorem-beta-bound}
Given two distinct explanations $w$ and $\widetilde{w}$ and a privacy budget $\epsilon_i$, \textsc{XRand} satisfies $\varepsilon_i$-LDP in randomizing each feature $i$ in top-$k$ features of $w$, i.e., $\frac{P(\textsc{XRand}(w_i) = z |w)}{P(\textsc{XRand}(\widetilde{w}_i) = z |  \widetilde{w})} \leq \exp(\varepsilon_i)$, if:  
\begin{equation}
\beta \le  \varepsilon_i + \ln(\tau-1) + \ln (\min  \frac{\exp(-\Delta_{\calL} (i,j))}{ \sum_{t=k+1}^{k+\tau} \exp(-\Delta_{\calL}(i,t))} ) \nonumber
\end{equation}
where $z \in Range(\textsc{XRand})$. \textbf{Proof:} See Appx. \ref{app:proof1}.
\end{theorem}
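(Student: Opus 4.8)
The plan is to unfold the $\varepsilon$-LDP condition of Definition~\ref{Different Privacy} directly for the single-feature randomizer specified in Eq.~\ref{globalRR} and to bound the worst-case likelihood ratio $\frac{P(\textsc{XRand}(w_i)=z\,|\,w)}{P(\textsc{XRand}(\widetilde{w}_i)=z\,|\,\widetilde{w})}$ over every output $z\in Range(\textsc{XRand})$ and every pair of explanations $w,\widetilde{w}$ that differ in the feature occupying top-$k$ slot $i$. I would treat the flip weights $q_j=\frac{\exp(-\Delta_{\calL}(i,j))}{\sum_{t=k+1}^{k+\tau}\exp(-\Delta_{\calL}(i,t))}$ and the candidate range $[k+1,k+\tau]$ as fixed data-dependent quantities determined by $\Delta_{\calL}$ (Eq.~\ref{delta L}) and hence shared by both inputs, so that the only thing distinguishing $w$ from $\widetilde{w}$ is which feature is reported with the ``keep'' probability $p_i=\frac{\exp(\beta)}{\exp(\beta)+\tau-1}$ rather than with a ``flip'' probability $q_{i,j}=\frac{\tau-1}{\exp(\beta)+\tau-1}q_j$.

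First I would enumerate the outcomes. Writing $A$ for the true feature at slot $i$ under $w$ and $B$ for the one under $\widetilde{w}$, there are three regimes: (i) $z$ is some out-of-top-$k$ feature other than $A$ or $B$, where both inputs flip to it and the ratio equals $q_{i,z}/q_{i,z}=1$; (ii) $z=B$, where $w$ flips while $\widetilde{w}$ keeps, giving a ratio below one; and (iii) $z=A$, where $w$ keeps with probability $p_i$ while $\widetilde{w}$ must flip to $A$ with probability $q_{i,A}$. Regime (iii) is the decisive one, and maximizing the resulting ratio $p_i/q_{i,j}$ over the flip target amounts to selecting the least likely flip, i.e. the smallest $q_j$. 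Substituting the two probabilities and cancelling the common normalizer $\exp(\beta)+\tau-1$ collapses the bound to $\frac{p_i}{q_{i,j}}=\frac{\exp(\beta)}{(\tau-1)\,q_j}$.

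I would then impose $\frac{\exp(\beta)}{(\tau-1)\min_j q_j}\le \exp(\varepsilon_i)$ and take logarithms, which rearranges immediately to the stated bound $\beta\le\varepsilon_i+\ln(\tau-1)+\ln\big(\min_j q_j\big)$. The routine algebra is the easy part; the main obstacle is justifying the worst-case reduction rigorously, namely confirming that no output/input configuration beats regime (iii) and, in particular, that treating $q_j$ as common to $w$ and $\widetilde{w}$ is legitimate, so that the flip-versus-flip ratio collapses to $1$ and a single $\min_j q_j$ factor---rather than a ratio of two distinct $q_j$ values---governs the constraint. A secondary subtlety I would need to address is the boundary case where the candidate out-of-top-$k$ sets for $w$ and $\widetilde{w}$ fail to coincide, which I would handle by restricting attention to the shared candidates and verifying that the excluded outputs only produce ratios no larger than those already covered in regimes (i)--(iii).
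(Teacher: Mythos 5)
Your proposal is correct and takes essentially the same route as the paper: the paper's proof bounds the likelihood ratio by $\frac{\max_z P(\textsc{XRand}(w_i)=z)}{\min_z P(\textsc{XRand}(\widetilde{w}_i)=z)}$, identifies these with the keep probability $p_i=\frac{\exp(\beta)}{\exp(\beta)+\tau-1}$ and the smallest flip probability $\frac{\tau-1}{\exp(\beta)+\tau-1}\min_j q_j$, and takes logarithms to get the stated bound on $\beta$. Your regime (i)--(iii) case analysis is just a slightly more careful justification of that one-line max/min step (confirming the keep-versus-least-likely-flip configuration is the worst case), but the decisive inequality $\frac{\exp(\beta)}{(\tau-1)\min_j q_j}\le\exp(\varepsilon_i)$ and the closing algebra are identical.
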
 


Based on Theorem \ref{theorem-beta-bound}, the total privacy budget $\varepsilon$ to randomize all 
top-$k$ features 
is the sum of all the privacy budget $\epsilon_i$, i.e., $ \varepsilon = \sum_{i=1}^k \varepsilon_i$, since each feature $i$ is randomized independently. From Theorem \ref{theorem-beta-bound} and Eq. \ref{globalRR}, it can be seen that as the privacy budget $\varepsilon$ increases, $\beta$ can increase and the flipping probability $q_{i,j}$ decreases. As a result, we switch fewer features out of top-$k$.

\paragraph{Privacy and Explainability Trade-off.}\label{ssec:tradeoff}
To understand 
the privacy and explainability trade-off, we analyze the data utility of \textsc{XRand} mechanism through the sum square error (SSE) of the original explanation $w$ and the one   resulting from \textsc{XRand} $w'$. The smaller the SSE is, the better data utility the randomization mechanism achieves. 

\begin{theorem}\label{utility}
Utility of \textsc{XRand}: $SSE = \sum_{x \in \mathcal{D}} \sum_{i=1}^{d} (w'_{x_i} - w_{x_i})^2 = \sum_{x \in \mathcal{D}} \sum_{i=1}^{k+\tau}  (w'_{x_i} - w_{x_i})^2$, where $d$ is the number of features in the explanation.
\end{theorem}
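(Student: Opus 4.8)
The plan is to reduce the claimed identity to a single support statement: for each $x \in \calD$, the per-sample perturbation vector $\phi$ (and hence the difference $w'_x - w_x = \phi$) is zero on every feature whose rank in the sorted aggregated explanation exceeds $k+\tau$. Once this is established, every summand $(w'_{x_i}-w_{x_i})^2$ whose sorted rank $i$ satisfies $i > k+\tau$ vanishes, so the tail $\sum_{i=k+\tau+1}^{d}$ contributes nothing and the sum $\sum_{i=1}^{d}$ collapses to $\sum_{i=1}^{k+\tau}$. Since a sum over all $d$ features is invariant under the relabeling $v$, it is harmless to carry out the bookkeeping in sorted order. Summing the resulting per-sample identity over $\calD$ then yields the stated equality, so the whole argument is really about locating the support of $\phi$.

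First I would pin down the support produced by Step~1. By construction, \textsc{XRand} only flips a top-$k$ feature $i \in [1,k]$ with an out-of-top-$k$ feature $j \in [k+1, k+\tau]$ (Eq.~\ref{globalRR}), so every pair in the flip set $\mathbb{S}$ satisfies $i \in [1,k]$ and $j \in [k+1, k+\tau]$. Consequently no feature whose sorted rank lies in $[k+\tau+1, d]$ is ever reordered at the aggregated level, and the ordering target handed to Step~2 constrains only ranks in $[1, k+\tau]$.

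Next I would propagate this into Step~2. The constraint set $\mathbb{Q}$ is assembled solely from the pairs in $\mathbb{S}$, using the neighboring sorted positions $i \pm 1$ and $j \pm 1$; since $i \le k$ and $j \le k+\tau$, every index $v(\cdot)$ appearing in $\mathbb{Q}$ corresponds to a sorted rank in $[1, k+\tau]$, up to the single boundary rank discussed below. The optimization~(\ref{optimization the loss L2}) explicitly fixes $\phi_i = 0$ for every $i \notin \mathbb{Q}$, so the support of $\phi$ is contained in the features indexed by $\mathbb{Q}$, i.e.\ within the top-$(k+\tau)$ block. This gives $w'_{x_i} = w_{x_i}$ for all features of sorted rank above $k+\tau$ and closes the telescoping.

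The main obstacle is the single boundary index: the constraint pair $(v(j+1), v(i))$ with $j = k+\tau$ references sorted rank $k+\tau+1$, which formally sits just outside the top-$(k+\tau)$ window. I would dispose of this either by adopting the natural convention that guard constraints referencing ranks beyond $k+\tau$ are dropped (there is no feature past the perturbation window whose rank needs protecting), or by observing that the regularizer $\lambda\|\phi\|$ in~(\ref{optimization the loss L2}) drives $\phi_{v(k+\tau+1)}$ to $0$ at the optimum: the guard inequality $w_{x_{v(k+\tau+1)}} + \phi_{v(k+\tau+1)} \le w_{x_{v(i)}} + \phi_{v(i)}$ is already feasible with $\phi_{v(k+\tau+1)} = 0$ once $\phi_{v(i)}$ is chosen, so minimizing $\|\phi\|$ sets it to zero. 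Either route confines the support of $\phi$ to $[1, k+\tau]$ and makes the identity exact.
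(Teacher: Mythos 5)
Your proposal is correct, and its core is the same observation the paper's own proof rests on: \textsc{XRand} only ever exchanges a top-$k$ feature with a feature of sorted rank in $[k+1,k+\tau]$, so the perturbation is supported in the top-$(k+\tau)$ window and the tail of the sum vanishes. The difference lies in what is actually verified. The paper's proof consists of exactly that Step-1 observation and stops; it never examines Step 2, even though the quantity in the theorem, $w'_x - w_x = \phi$, is produced by the sample-level optimization in (\ref{optimization the loss L2}), not by the aggregated-level flip. Your proof is the one that closes this link: you note that $\mathbb{Q}$ is built only from pairs in $\mathbb{S}$, that the program forces $\phi_i = 0$ for every index not appearing in $\mathbb{Q}$, and hence that the support of $\phi$ is confined to the indices referenced by $\mathbb{Q}$. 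In doing so you also surface a boundary case the paper silently ignores: for $(i,j) \in \mathbb{S}$ with $j = k+\tau$, the guard pair $(v(j+1), v(i))$ puts the rank-$(k+\tau+1)$ feature into $\mathbb{Q}$, so its $\phi$-coordinate is not forced to zero and the stated identity is not automatic. Of your two fixes, the first (drop guard constraints that reference ranks beyond $k+\tau$) is the clean one and makes the identity exact. The second is heuristic and should not be relied on: the optimum of (\ref{optimization the loss L2}) trades the explanation loss against $\lambda\|\phi\|$ jointly under the constraints, and when $w_{x_{v(k+\tau+1)}} > w_{x_{v(i)}}$ it may be cheaper in loss to push $\phi_{v(k+\tau+1)}$ negative than to raise $\phi_{v(i)}$ enough, so the regularizer alone does not guarantee $\phi_{v(k+\tau+1)} = 0$ at the optimum.
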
  

\begin{proof}
It is easy to see that
we only consider the probability of flipping the top-$k$ features to be out-of-the-top-$k$ up to the feature $k+\tau$. Thus, all features after $k+\tau$, i.e., from $k+\tau+1$ to $d$ are not changed. Hence the theorem follows.
\end{proof}


From the theorem, at the same $\varepsilon$, the smaller the $\tau$, the higher the data utility that \textsc{XRand} achieves. Intuitively, if $\tau$ is large, it is more flexible for the top-$k$ features to be flipped out, but it will also impair the model explainability since the original top-$k$ features are more likely to be moved far away from the top $k$. With high values of $\varepsilon$, we can obtain a smaller SSE, thus, achieving better data utility. The effect of $\varepsilon$ and $\tau$ on the SSE value is illustrated in Fig. \ref{sse-eps-tau} (Appendix). 

\section{Certified Robustness}\label{sec:rob}





Our proposed \textsc{XRand} can be used as a defense against the XBA since it protects the top-$k$ important features. We further establish the connection between \textsc{XRand} with certified robustness against XBA. 
Given a data sample $x$: 1) In the training time, we guarantee that up to a portion of poisoning samples in the outsourced training data, XBA fails to change the model predictions; and 2) In the inference time, we guarantee that up to a certain backdoor trigger size, XBA fails to change the model predictions. A primer on certified robustness is given in Appx. \ref{app:cert}.



 \subsection{Training-time Certified Robustness }

We consider  the original training data $\mathcal{D}$ as the proprietary  data, and  the explanation-guided backdoor samples $\mathcal{D}_o$ as the outsourced data inserted into the proprietary  data. 
The outsourced data $\mathcal{D}_o$ alone may not be sufficient to train a good classifier. In addition, the outsourced data inserted into propriety data can lessen the certified robustness bound of the propriety data. Therefore, we cannot quantify the certified poisoning training size of the outsourced data $\mathcal{D}_o$ directly by applying a bagging technique \citep{jia2020intrinsic}. To address this problem, we 
quantify the certified poisoning training size $r$ of $\mathcal{D}_o$ against XBA by uncovering its correlation with the poisoned training data $\mathcal{D}' = \mathcal{D} \cup \mathcal{D}_o$.

Given a model prediction on a data sample $x$ using $\mathcal{D}$, denoted as $f(\mathcal{D}, x)$, we ask a simple question: ``What is the minimum number poisoning data samples, i.e., certified poisoning training size $r_\mathcal{D}$, added into $\mathcal{D}$ to change the model prediction on $x$: $f(\mathcal{D}, x) \neq f(\mathcal{D}_+, x)$?" After adding $\mathcal{D}_o$ into $\mathcal{D}$, we ask the same question: ``What is the minimum number poisoning data samples, i.e., certified poisoning training size $r_{\mathcal{D}'}$, added into $\mathcal{D}' = \mathcal{D} \cup \mathcal{D}_o$ to change the model prediction on $x$: $f(\mathcal{D}', x) \neq f(\mathcal{D}'_{+}, x)$?" The difference between $r_\mathcal{D}$ and $r_{\mathcal{D}'}$ provides us a certified poisoning training size on $\mathcal{D}_o$. Intuitively, if $\mathcal{D}_o$ does not consist of poisoning data examples, then $r_\mathcal{D}$ is expected to be relatively the same with $r_{\mathcal{D}'}$. Otherwise, $r_{\mathcal{D}'}$ can be significantly smaller than $r_{\mathcal{D}}$ indicating that $\mathcal{D}_o$ is heavily poisoned with at least $r =  r_{\mathcal{D}} - r_{\mathcal{D}'}$ number of poisoning data samples towards opening backdoors on $x$.

\begin{theorem}
Given two certified poisoning training sizes $r^*_{\mathcal{D}} = \arg\min_{r_\mathcal{D}} r_\mathcal{D}$ and $r^*_{\mathcal{D}'} = \arg\min_{r_{\mathcal{D}'}} r_{\mathcal{D}'}$, the certified poisoning training size $r$ of the outsourced data $\mathcal{D}_o$ is:
\begin{equation} 
    r = r_{\mathcal{D}}^* - r_{\mathcal{D}'}^*
    \label{certified robustness bound}
\end{equation}
\textbf{Proof:} Refer to Appx. \ref{Proof of theorem certified training} for the proof and its tightness.
\label{certified poisoning training theory}
\end{theorem}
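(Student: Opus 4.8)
The plan is to build on the bagging-based certified robustness framework of \citet{jia2020intrinsic}, which certifies a poisoning radius for any \emph{fixed} training set, and then to isolate the poison contributed by $\mathcal{D}_o$ by comparing the certified radii of $\mathcal{D}$ and $\mathcal{D}' = \mathcal{D} \cup \mathcal{D}_o$. First I would recall that, for a training set $T$ and test sample $x$, bagging aggregates base classifiers trained on random subsamples and predicts by majority vote; the certified poisoning size is governed by the gap between the largest and second-largest label probabilities, and the certified quantity $r_T^*$ is the smallest number of inserted poison samples able to close this gap and flip $f(T,x)$. Instantiating this with $T=\mathcal{D}$ and $T=\mathcal{D}'$ produces the two optima $r_\mathcal{D}^*$ and $r_{\mathcal{D}'}^*$ appearing in the statement.

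The heart of the argument is a conservation (additivity) identity $r_\mathcal{D}^* = r + r_{\mathcal{D}'}^*$. Since $\mathcal{D}'=\mathcal{D}\cup\mathcal{D}_o$ already contains the outsourced data, any attack that flips $f(\mathcal{D},x)$ by inserting samples into $\mathcal{D}$ can be read as first inserting the effective poison carried by $\mathcal{D}_o$ and then inserting the remaining poison on top of $\mathcal{D}'$. I would prove both directions. For the upper bound $r_\mathcal{D}^* \le r + r_{\mathcal{D}'}^*$, take an optimal attack of size $r_{\mathcal{D}'}^*$ that flips the prediction starting from $\mathcal{D}'$; prepending the $r$ poison samples already present in $\mathcal{D}_o$ yields a flipping attack of size $r + r_{\mathcal{D}'}^*$ starting from $\mathcal{D}$, so the minimal flipping size from $\mathcal{D}$ cannot exceed this. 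For the lower bound $r_\mathcal{D}^* \ge r + r_{\mathcal{D}'}^*$, I would invoke minimality of $r_{\mathcal{D}'}^*$: any flipping attack from $\mathcal{D}$ whose restriction to $\mathcal{D}_o$ contributes exactly the $r$ effective poison samples must still supply at least $r_{\mathcal{D}'}^*$ further poison relative to $\mathcal{D}'$, since otherwise we contradict the optimality of $r_{\mathcal{D}'}^*$. Combining the two inequalities gives equality, and rearranging yields $r = r_\mathcal{D}^* - r_{\mathcal{D}'}^*$.

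I expect the main obstacle to be making the decomposition of poisoning strategies rigorous, because the subsample distributions for $\mathcal{D}$ and $\mathcal{D}'$ differ: the two sets have different sizes, so the sampling factors $\binom{n-r}{k}/\binom{n}{k}$ are not identical across the two certifications. I would therefore need to argue that the certification optimization over $\mathcal{D}$, in which the $\mathcal{D}_o$ samples are treated as inserted poison, is consistent with the joint certification over $\mathcal{D}'$ plus further inserted poison, i.e., that promoting $\mathcal{D}_o$ from ``inserted poison'' to part of the base set does not alter the label-probability gap that drives the certificate. Establishing this consistency, and with it the conservation identity, is the crux; the remainder is bookkeeping. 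Finally, for tightness I would exhibit a concrete poisoning configuration of $\mathcal{D}_o$ that attains the bound, showing that $r = r_\mathcal{D}^* - r_{\mathcal{D}'}^*$ is not merely a valid lower bound on $\mathcal{D}_o$'s poison but the exact certified poisoning training size of the outsourced data.
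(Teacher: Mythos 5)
Your starting point (the bagging certificates of \citep{jia2020intrinsic}) matches the paper's, but the core of your argument --- the conservation identity $r^*_{\mathcal{D}} = r + r^*_{\mathcal{D}'}$ proved by exchanging attacks between $\mathcal{D}$ and $\mathcal{D}'$ --- contains a genuine gap, and it is exactly the step you flag and then defer. The upper-bound direction is miscounted: an optimal flipping attack on $\mathcal{D}'$ can be replayed starting from $\mathcal{D}$ only after inserting \emph{all} of $\mathcal{D}_o$, not just its $r$ ``effective poison'' samples, so the exchange yields only $r^*_{\mathcal{D}} \le |\mathcal{D}_o| + r^*_{\mathcal{D}'}$. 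The clean portion of $\mathcal{D}_o$ cannot be dropped, because it enlarges the base set and changes the subsampling distribution that drives the certificate; the closed-form bounds make this explicit, since $r^*_{\mathcal{D}} = \lceil |\mathcal{D}| \bigl( \sqrt[k]{1+(\underline{p}_l - \overline{p}_{\neg l} - \sigma_l - \sigma_{\neg l})}\bigr) - 1 \rceil$ while $r^*_{\mathcal{D}'}$ is the same expression with $|\mathcal{D}'|$ and a different gap $\underline{p}'_l - \overline{p}'_{\neg l}$, so the two certificates scale with different dataset sizes and different label-probability gaps, and the ``consistency'' you hope for (that promoting $\mathcal{D}_o$ into the base set leaves the gap untouched) is false in general. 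The lower-bound direction fails too: the minimal attack from $\mathcal{D}$ need not pass through $\mathcal{D}'$ at all, so minimality of $r^*_{\mathcal{D}'}$ constrains nothing. Finally, there is a semantic circularity: your argument presupposes that $r$, the poison effectively carried by $\mathcal{D}_o$, is a well-defined quantity to be conserved, whereas in the theorem $r$ is precisely the quantity being constructed.

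The paper's proof takes a more modest route. It instantiates the Neyman--Pearson-based bound of \citep{jia2020intrinsic} twice --- once on $\mathcal{D}$ and once on $\mathcal{D}' = \mathcal{D} \cup \mathcal{D}_o$ --- to obtain the two closed-form certificates above, and then \emph{defines} the certified poisoning size of the outsourced data as their difference $r = r^*_{\mathcal{D}} - r^*_{\mathcal{D}'}$, justified by the interpretation that a clean $\mathcal{D}_o$ leaves the two radii roughly equal, while a poisoned $\mathcal{D}_o$ depresses $r^*_{\mathcal{D}'}$. Tightness is then argued not by exhibiting an attack configuration, as you propose, but by observing that $r^*_{\mathcal{D}}$ is fixed (XBA cannot alter the proprietary data) and that $r^*_{\mathcal{D}'}$ shifts one-for-one as poison is added to $\mathcal{D}_o$, so the difference tracks the poisoning level of $\mathcal{D}_o$ exactly. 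If you wish to salvage your additivity identity, you would have to prove it only approximately, with an error term accounting for the $|\mathcal{D}|$-versus-$|\mathcal{D}'|$ scaling and for the dilution of the subsample distribution by the clean part of $\mathcal{D}_o$ --- which is precisely the bookkeeping your proposal leaves unresolved.
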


\subsection{Inference-time Certified Robustness}

It is not straightforward to adapt existing certified robustness bounds at the inference-time into \textsc{XRand}, since there is a gap between model training as in existing approaches \cite{jia2020intrinsic,lecuyer2019certified,phan2020scalable} and the model training with explanation-guided poisoned data as in our system. Existing approaches can randomize the data input $x$ and then derive certified robustness bounds given the varying output. This typical process does not consider the explanation-guided poisoned data that can potentially affect certified robustness bounds in our system. 
To address this gap, note
that we can always find a mechanism to inject random noise into the data samples $x$ such that the samples achieve the same level of DP guarantee as the explanations. Based on this, we can generalize existing certified bounds against XBA at the inference time in \textsc{XRand}.

When explanation-guided backdoor samples are inserted into the training data, upon bounding the sensitivity that the backdoor samples change the output of $f$, there always exists a noise $\alpha$
that can be injected into a benign sample $x$, i.e., $x+\alpha$, to achieve an equivalent $\varepsilon$-LDP protection. Given the explanation $w_x$ of $x$, we focus on achieving a robustness condition to $L_p(\mu)$-norm attacks, where $\mu$ is the radius of the norm ball, formulated as follows:
\begin{equation}
    \forall \alpha \in L_p(\mu): f_l({x} + \alpha|w_x) > f_{\neg l} ({x} + \alpha|w_x)
    \label{robustness condition}
\end{equation}
where $l\in \{0, 1\}$ is the true label of $x$ and $\neg l $ is the NOT operation of $l$ in a binary classification problem.





There should exist a correlation among $\alpha$ and $w_x$ that needs to be uncovered in order to bound the robustness condition in Eq.~\ref{robustness condition}. However, it is challenging to find a direct mapping function $\mathcal{M}: w_x \rightarrow \alpha$ so that when we randomize $w_x$, the change of $\alpha$ is quantified. 
We address this challenge by quantifying the sensitivity of $\alpha$ given the average change of the explanation of multiple samples $x \in \mathcal{X}$, as follows:
\begin{equation}
    \Delta_{\alpha | w} = \frac{1}{|\mathcal{X}|d}  \sum_{x \in \mathcal{X}}|w_x - w'_x|_1 
    \label{alpha w}
\end{equation}
where $|\mathcal{X}|$ is the size of $\mathcal{X}$.


$\Delta_{\alpha | w}$ can be considered as a bounded sensitivity of \textsc{XRand} given the input $x$ since: \textbf{(1)} We can achieve the same DP guarantee by injecting Laplace or Gaussian noise into the input $x$ using the sensitivity $\Delta_{\alpha | w}$; and \textbf{(2)} The explanation perturbation happens only once and is permanent, that is, there is no other bounded sensitivity associated with the one-time explanation perturbation. The sensitivity $\Delta_{\alpha | w}$ establishes a new connection between explanation perturbation and the model sensitivity given the input sample $x$. That enables us to derive robustness bounds using different techniques, i.e., PixelDP \cite{lecuyer2019certified,Phan2019HeterogeneousGM} and (boosting) randomized smoothing (RS) \cite{horvath2022boosting, cohen2019certified}, since we consider the sensitivity $\Delta_{\alpha | w}$ as a part of randomized smoothing to derive and enhance certified robustness bounds. 

The rest of this section only discusses the bound using PixelDP, we refer the readers to Appx. \ref{app:pixeldp} for the bound using boosing RS. 
Given a randomized prediction $f(x)$ satisfying $(\varepsilon, \delta)$-PixelDP w.r.t. a $L_p(\mu)$-norm metric, we have:
\begin{equation}
    \forall l \in \{0, 1\}, \forall \alpha \in L_p(\mu): \mathbb{E} f_l(x) \le e^{\varepsilon} \mathbb{E} f_l (x+ \alpha) + \delta
    \label{certified bound}
\end{equation}
where $\mathbb{E} f_l(x)$ is the expected value of $f_l(x)$, $\varepsilon$ is a predefined privacy budget, and $\delta$ is a broken probability. When we use a Laplace noise, $\delta = 0$.

We then apply PixelDP with the  sensitivity $\Delta_{\alpha | w}$ and  a noise standard deviation $\sigma = \frac{\Delta_{\alpha | w} \mu}{\varepsilon}$ for Laplace noise, or $\sigma = \frac{\Delta_{\alpha | w} \mu \sqrt{2\ln{(1.25/\delta)}}}{\varepsilon}$ for Gaussian noise. From that, when maximizing the attack trigger's magnitude $\mu$:  $\mu_{\max} = \max_{\mu \in \mathbb{R}^+} \mu$ such that
the  generalized robustness condition (Eq. \ref{certified bound}) holds, the prediction on $x$ using \textsc{XRand} is robust against the XBA up to $\mu_{\max}$. As a result,  we have a robustness  certificate of $\mu_{\max}$ for $x$.

\section{Experiments} \label{sec:exp}


To evaluate the performance of our defense, we conduct the XBA proposed by \cite{severi2021explanation} against the explanations returned by \textsc{XRand} to create backdoors on cloud-hosted malware classifiers. Our experiments aim to shed light on understanding (1) the effectiveness of the defense in mitigating the XBA, and
(2) the faithfulness of the explanations returned by \textsc{XRand}. 
The experiments are conducted using LightGBM \citep{2018arXiv180404637A} and EmberNN \citep{severi2021explanation} classification models that are trained on the EMBER \citep{2018arXiv180404637A} dataset. A detailed description of the experimental settings can be found in Appx. \ref{app:exp}. We quantify \textsc{XRand} via the following metrics:

{\it \underline {Attack success rate.}} This metric is defined as the portion of trigger-embedded malware samples that are classified as goodware by the backdoored model. Note that we only embed the trigger into malware samples that were classified correctly by the clean model. The primary goal of our defense is to reduce this value.

{\it \underline {Log-odds.}} To evaluate the faithfulness of our \textsc{XRand} explanation, we compare the log-odds score of the \textsc{XRand} explanations with that of the ones originally returned by SHAP. Based on the explanation of a sample, the log-odds score is  computed by identifying the top 20\% important features that, if erased, can change the predicted label of the sample \citep{shrikumar2017learning}. Then we obtain the change in log-odd of the prediction score of the original sample and the sample with those features erased. The higher the log-odds score, the better the explanation in terms of identifying important features. To maintain a faithful explainability, the \textsc{XRand} explanations should have comparable log-odds scores as the original SHAP explanations.

\begin{figure}[t]
    \centering
	\subfloat[LightGBM]{
	    \includegraphics[width=0.45\linewidth]{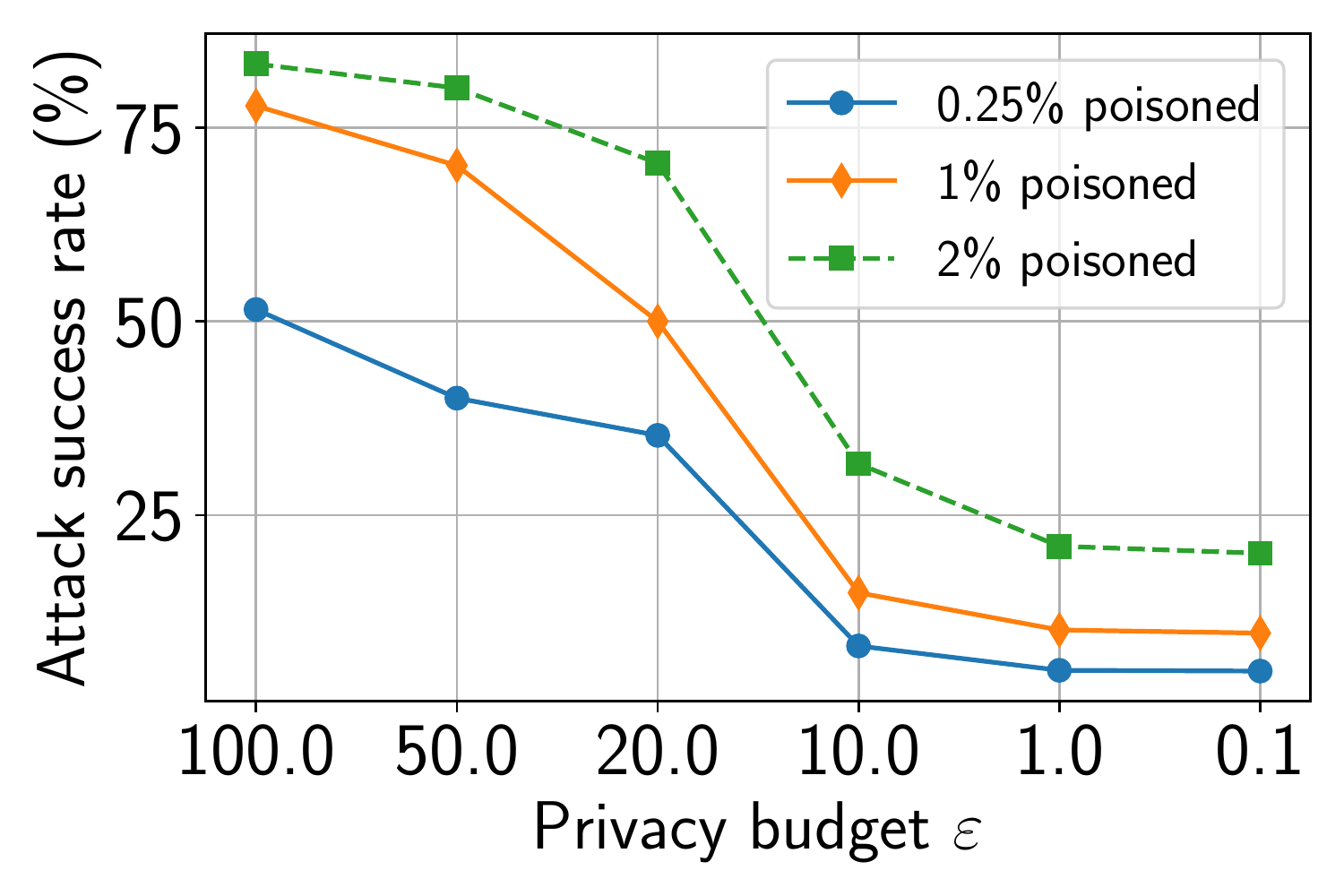}
    	\label{fig:attpoison-gbm}
	}%
 	\hfill
	\subfloat[EmberNN]{
	    \includegraphics[width=0.45\linewidth]{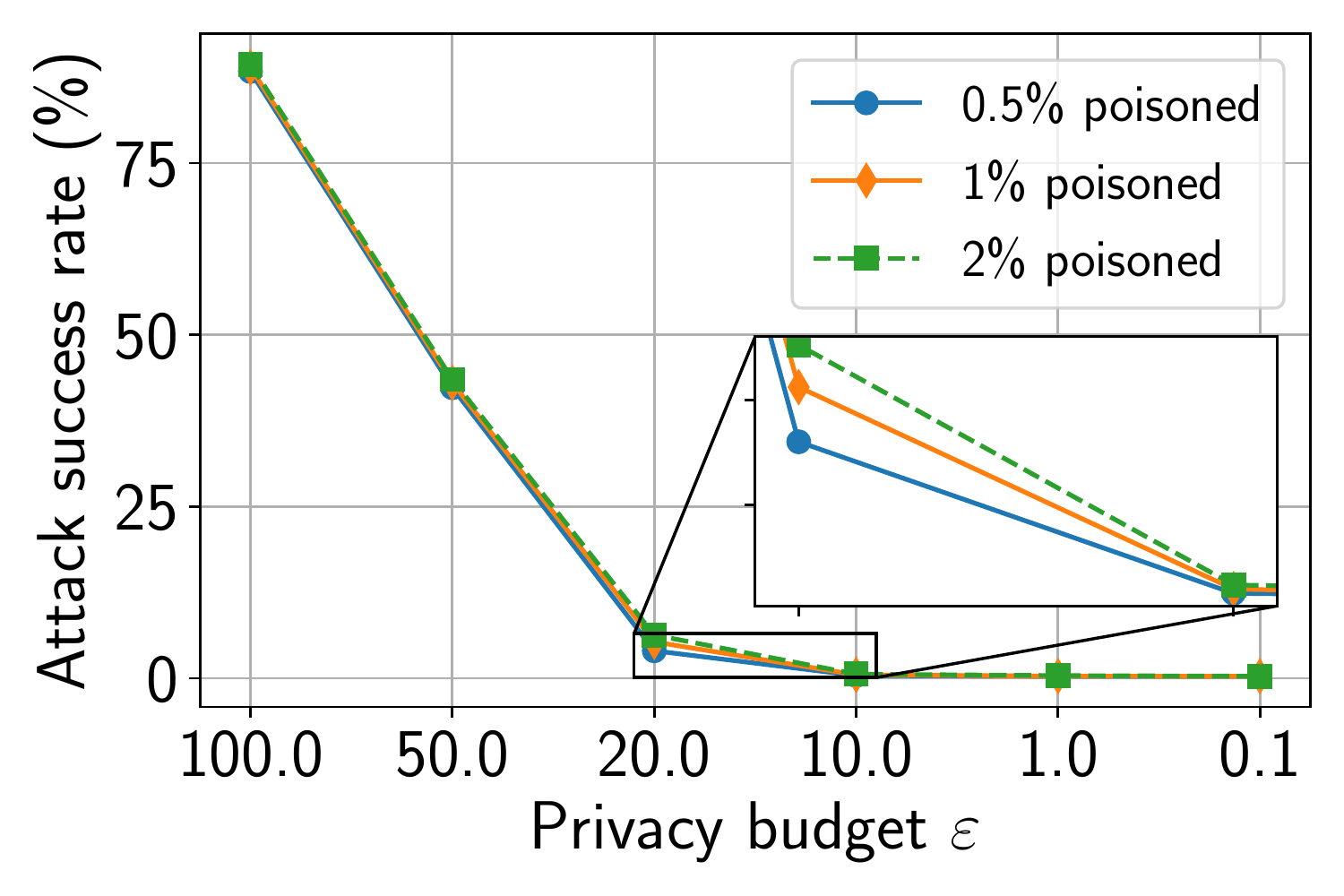}
	    \label{fig:attpoison-ember}
    }
    \caption{Attack success rate as a function of privacy budget $\varepsilon$ and the portion of poisoned samples on LightGBM and EmberNN. The trigger size is fixed at 10 and 16 for LightGBM and EmberNN, respectively.}
    \label{fig:attpoison}
\end{figure}



\paragraph{Attack Mitigation.} 
We observe the attack success rate of XBA when our \textsc{XRand} explanations are used to craft backdoor triggers. We set $k$ to be equal to the trigger size of the attack, and fix the predefined threshold $\tau=50$. Fig. \ref{fig:attpoison} highlights the correlation between the attack success rate and the privacy budget $\varepsilon$ in our defense. Intuitively, the lower the $\varepsilon$, the more obfuscating the top goodware-oriented features become. Hence, Figs. \ref{fig:attpoison-gbm} and \ref{fig:attpoison-ember} show that the attack success rate is greatly diminished as we tighten the privacy budget, since the attacker has less access to the desired features. Moreover, in a typical backdoor attack, injecting more poisoned samples into the training data makes the attack more effective. Such behavior is exhibited in both LightGBM (Fig. \ref{fig:attpoison-gbm}) and EmberNN (Fig. \ref{fig:attpoison-ember}), though EmberNN is less susceptible to the increase of poisoned data. However, in practice, the attacker wishes to keep the number of poisoning samples relatively low to remain stealthy. At a 1\% poison rate, our defense manages to reduce the attack success rate from 77.8\% to 10.2\% with $\varepsilon = 1.0$ for LightGBM. It performs better with EmberNN where the attack success rate is reduced to 5.3\% at $\varepsilon = 10.0$.

\begin{figure}[t]
    \centering
	\subfloat[LightGBM]{
	    \includegraphics[width=0.45\linewidth]{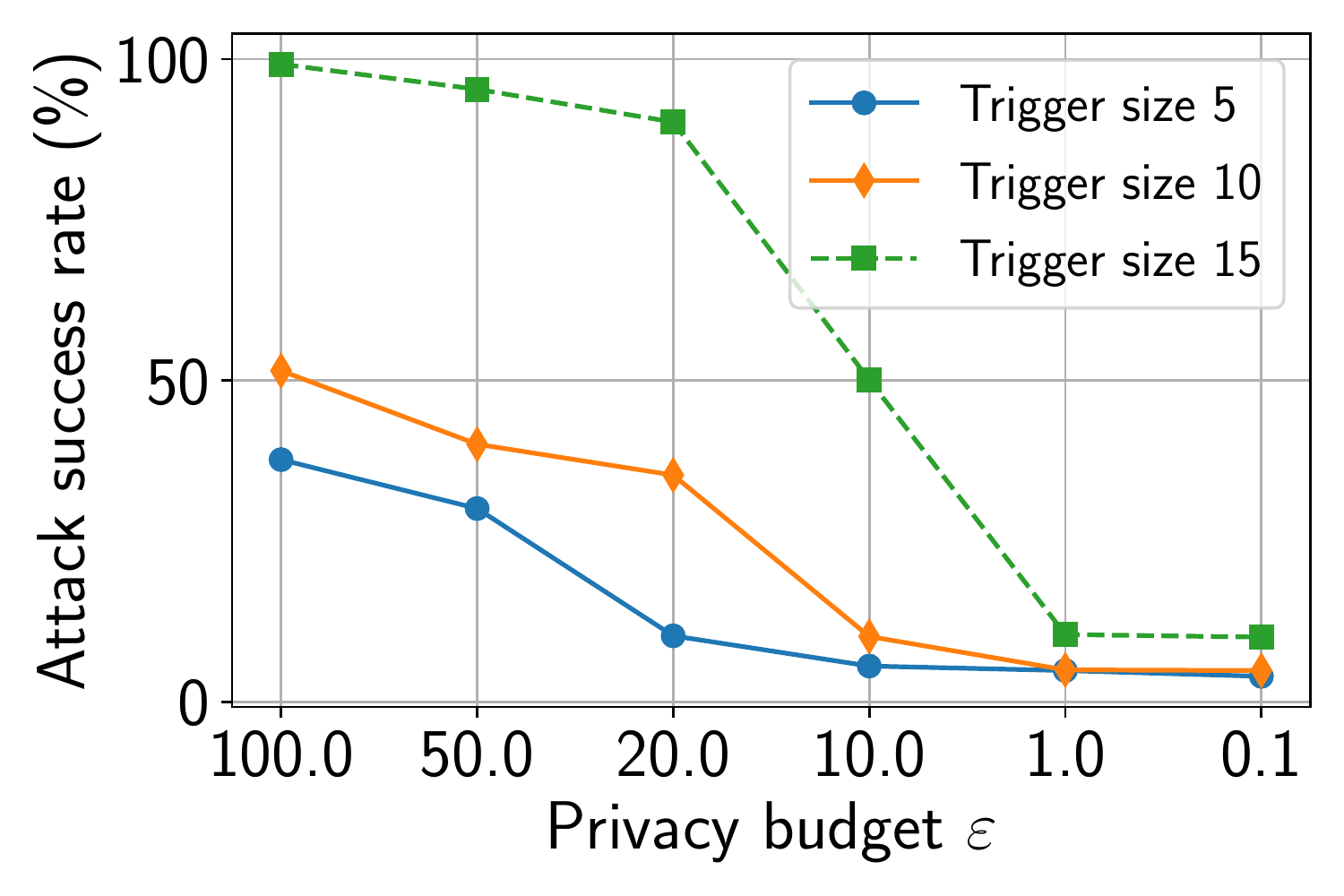}
    	\label{fig:atttrigger-gbm}
	}%
	\hfill
	\subfloat[EmberNN]{
	    \includegraphics[width=0.45\linewidth]{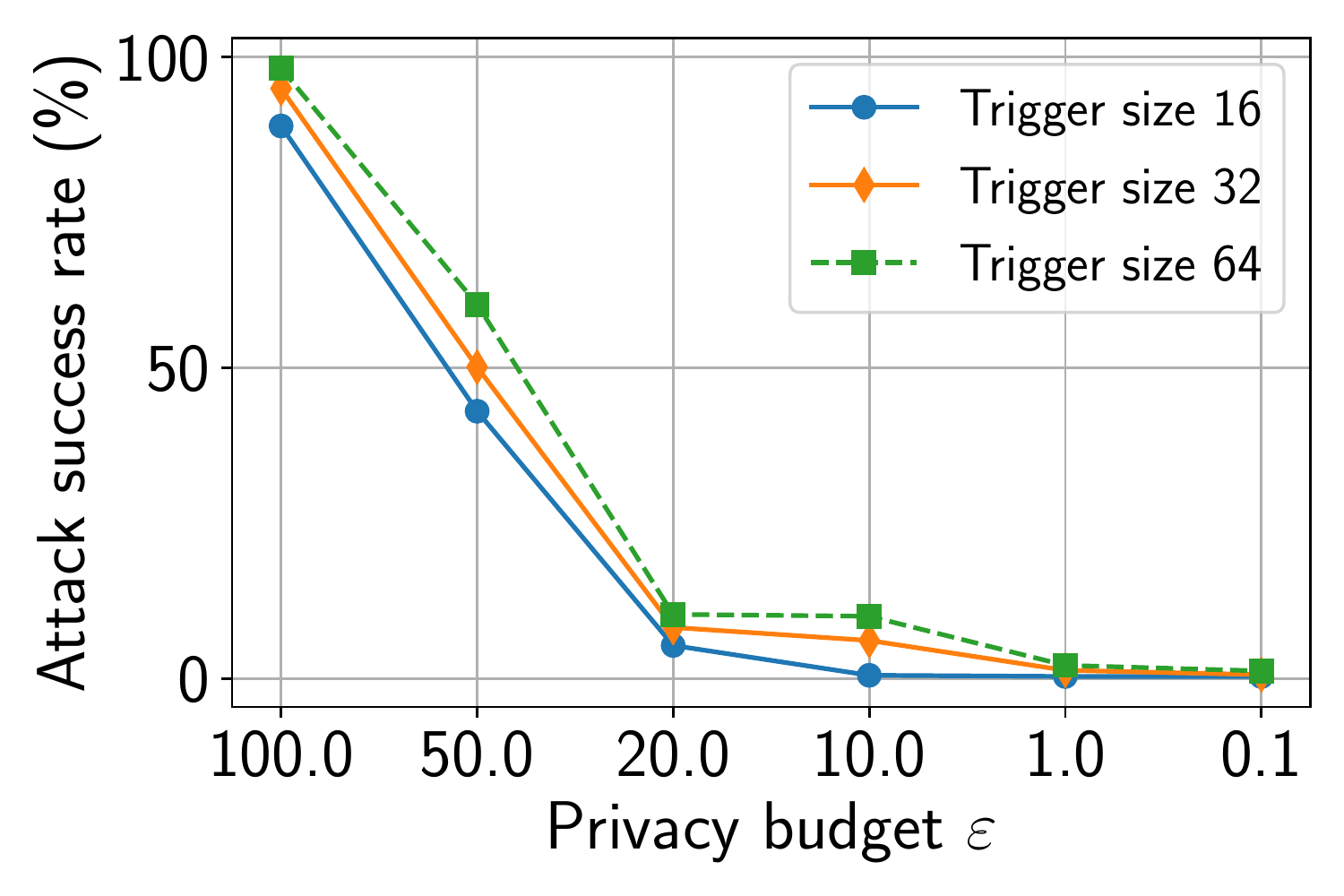}
	    \label{fig:atttrigger-ember}
    }
    \caption{Attack success rate as a function of trigger size and privacy budget $\varepsilon$. We vary the trigger size and fix the portion of poisoned samples at 0.25\% and 1\% for LightGBM and EmberNN, respectively. 
    }
    \label{fig:atttrigger}
\end{figure}



Additionally, we examine the effect of the trigger sizes on our defense. The trigger size denotes the number of features that the attacker modifies to craft poisoned samples. We vary the trigger size consistently with previous work \citep{severi2021explanation}. In backdoor attacks, a larger trigger makes the backdoored model more prone to misclassification, thus improving the attack success rate. Fig. \ref{fig:atttrigger} shows that the attack works better with large triggers on both models, however, as aforementioned, the attacker would prefer small trigger sizes for the sake of stealthiness. This experiment shows that, for the trigger sizes that we tested, our proposed defense can successfully maintain a low attack success rate given a wide range of the privacy budget $\varepsilon \in [0.1, 10]$ (Fig. \ref{fig:attpoison-gbm}, \ref{fig:attpoison-ember}).



\begin{figure}[t]
    \centering
    \includegraphics[width=0.55\linewidth]{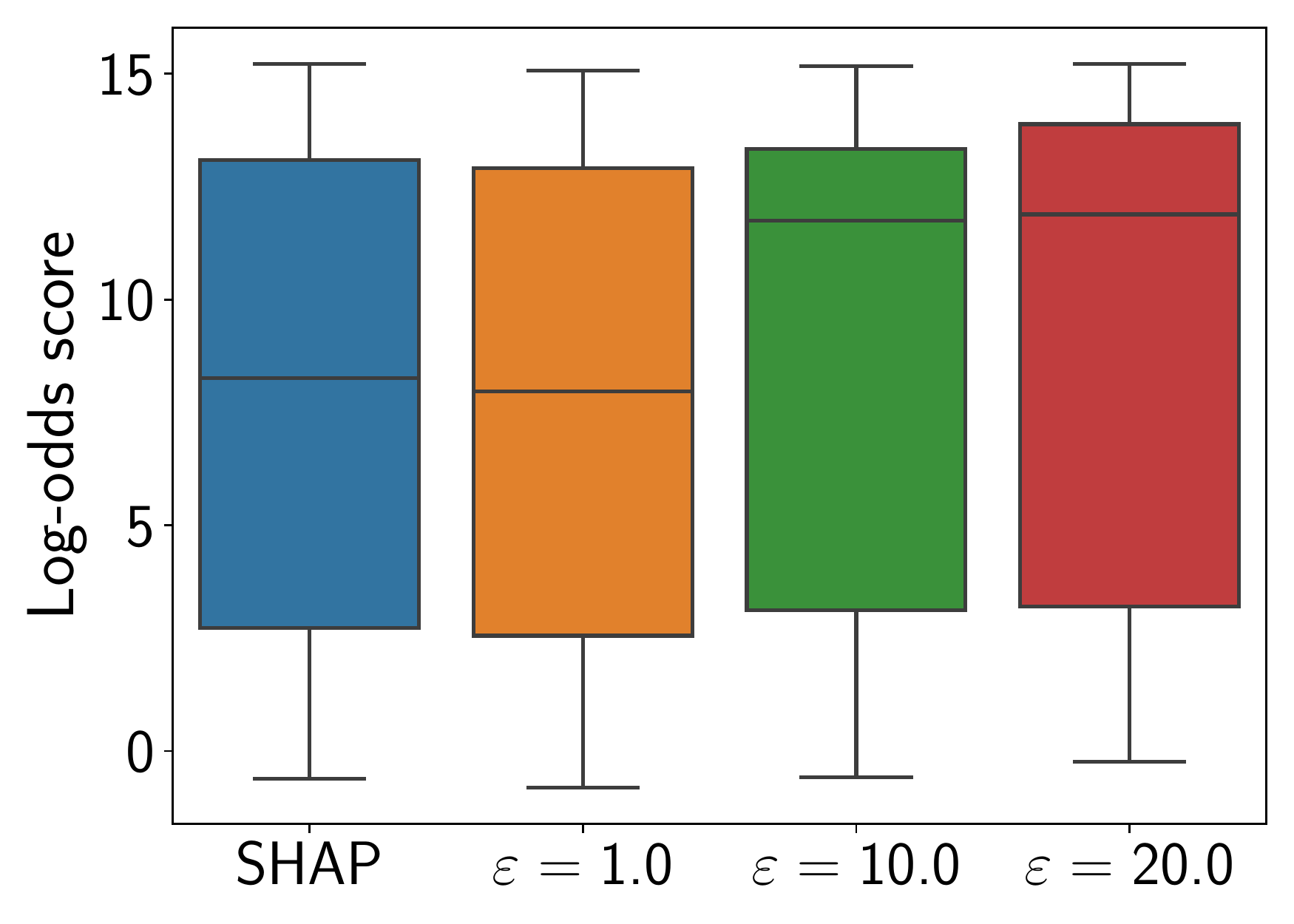}
    \caption{Log-odds score of the explanations of 20,000 goodware and malware samples. The leftmost box represents the original explanations by SHAP, the remaining boxes illustrate the explanations after applying \textsc{XRand} at $\varepsilon = 1.0, 10.0, 20.0$, respectively.}
    \label{fig:logodd}
\end{figure}

We refer the readers to Appx. \ref{app:exp} for more experimental results on additional malware datasets and the evaluation of our robustness bounds. In short, regarding the training-time bound, we observe that a smaller privacy budget $\varepsilon \in [0.1, 10]$ results in a more robust model against the XBA. As for the inference-time bound, we obtain high certified accuracy (Eq. \ref{certified accuracy}) under rigorous privacy budgets, i.e.,  $89.17\%$ and $90.42\%$ at $\varepsilon \in [0.1, 1.0]$. We notice that the PixelDP-based bound attains a stronger performance than using boosting RS. This is because boosting RS is not designed for models trained on backdoored data like \textsc{XRand}.

\paragraph{Faithful Explainability.} From the previous results, we observe that a wide range of the privacy budget $\varepsilon \in [0.1, 10]$ provides a good defense against the XBA. The question remains whether the explanations resulting from these values of $\varepsilon$ are still faithful. Fig. \ref{fig:logodd} shows the log-odds score of the original explanations returned by SHAP and of the ones after applying our \textsc{XRand} mechanism. The \textsc{XRand} explanations at $\varepsilon = 1.0, 10.0$ have comparable log-odds scores to those of SHAP. This is because our defense works with small values of $k$ (e.g., $k=10$). Therefore, \textsc{XRand} only randomizes the SHAP values within a small set of top goodware-oriented features. As a result,  \textsc{XRand} can still capture the important features in its explanations.

\begin{figure}
    \centering
	\subfloat[SHAP explanation (without \textsc{XRand})]{
	    \includegraphics[width=1\linewidth]{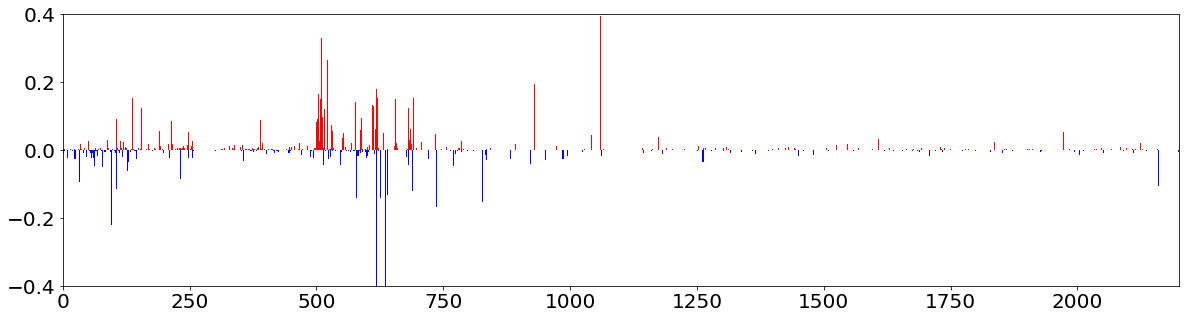}
    	\label{fig:expori}
	}%
	\hfill
	\subfloat[\textsc{XRand} explanation]{
	    \includegraphics[width=1\linewidth]{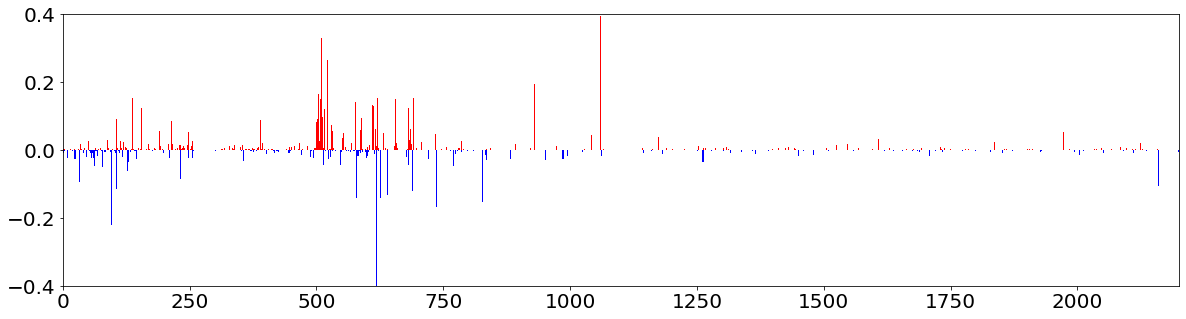}
	    \label{fig:expnew}
    }
    \caption{Visualizing the SHAP explanation and our \textsc{XRand} explanation of a test sample. The plot shows the SHAP value of each feature. The red vertical lines represent positive SHAP values that indicate malware-oriented features, while the blue vertical lines are negative SHAP values indicating goodware-oriented features.}
    \label{fig:expcmp}
\end{figure}

Furthermore, we visualize the explanation of a test sample before and after applying  \textsc{XRand} in Figs. \ref{fig:expori} and \ref{fig:expnew}, respectively. As can be seen, the SHAP values of the two explanations largely resemble one another, except for minor differences in less than 10 features (out of 2,351 features). Importantly, the \textsc{XRand} explanation evidently manifests similar sets of important malware and goodware features as the original explanation by SHAP, which also explains the comparable log-odds score in Fig. \ref{fig:logodd}. More visualizations on \textsc{XRand} can be found in Appx. \ref{app:viz}.


\section{Related Work} \label{sec:rel}
\paragraph{Vulnerability of Explanations and Backdoor Defenses.} 
There are two main lines of research in studying the vulnerability of model explanations. First, model explainers are prone to adversarial manipulations, thereby making the explanations unreliable and untrustworthy \cite{heo2019fooling, slack2020fooling, zhang2020interpretable, ghorbani2019interpretation}. Second, the information leaked through the explanations can be exploited to conduct several attacks against the black-box models \cite{shokri2021privacy,milli2019model,miura2021megex,zhao2021exploiting,severi2021explanation}. 

Our work focuses on the second direction in which we show that MLaaS is particularly susceptible to this type of attack. Although such vulnerability has been studied before, the discussion on its defense mechanism is still limited. To that extent, we propose a new concept of LDP explanation such that we can obfuscate the explanations in a way that limits the knowledge that can be leveraged by the attackers while maintaining the faithfulness of the explanations. 

Moreover, several defenses have been proposed to tackle the backdoor attacks \cite{liu2018fine, tran2018spectral, wang2019neural}. 
As evidenced by \citep{severi2021explanation} that the backdoors created by the XBA remain stealthy against these defenses. Thus,
our \textsc{XRand} is the first solution that can tackle XBA in MLaaS.

\paragraph{LDP Mechanisms.} 
Existing LDP mechanisms can be classified into four lines.  \textbf{1)} Direct encoding approaches \citep{warner1965randomized,kairouz2016discrete} are to directly apply RR mechanisms to a binary response to output the true and false responses with different probabilities. 
\textbf{2)} Generalized RR approaches  \citep{duchi2019lower,wang2019collecting,zhao2020local} directly perturb the numerical (real) value of data into a finite set or continuous range of output values.  \textbf{3)} Hash-based approaches \citep{erlingsson2014rappor,wang2017locally,bassily2015local,acharya2019hadamard,wang2019murs}. 
\textbf{4)} Binary encoding-based approaches \citep{arachchige2019local,lyu2020towards} converts input $x$ into a binary encoding vector of $0$ and $1$, then independently applies RR mechanisms on these bits. Despite the effectiveness of these RR mechanisms, existing approaches have not been designed for XBA, in which the explanations expose the training data background.

\section{Conclusion} \label{sec:con}
In this paper, we have shown that, although explanations help improve the understanding and interpretability of black-box models, they also leak essential information about the inner workings of the models. Therefore, the black-box models become more vulnerable to attacks, especially in the context of MLaaS where the prediction and its explanation are returned for each query. 
With a novel two-step LDP-preserving mechanism, we have proposed \textsc{XRand} to protect the model explanations from being exploited by adversaries via obfuscating the top important features, while maintaining 
the faithfulness of explanations. 

\section*{Acknowledgments}
This material is based upon work supported by the National Science Foundation under grants IIS-2123809, CNS-1935928, and CNS-1935923.

\bibliography{aaai23}

\begin{thebibliography}{52}
\providecommand{\natexlab}[1]{#1}

\bibitem[{Acharya, Sun, and Zhang(2019)}]{acharya2019hadamard}
Acharya, J.; Sun, Z.; and Zhang, H. 2019.
\newblock Hadamard response: Estimating distributions privately, efficiently,
  and with little communication.
\newblock In \emph{The 22nd International Conference on Artificial Intelligence
  and Statistics}, 1120--1129.

\bibitem[{{Anderson} and {Roth}(2018)}]{2018arXiv180404637A}
{Anderson}, H.~S.; and {Roth}, P. 2018.
\newblock {EMBER: An Open Dataset for Training Static PE Malware Machine
  Learning Models}.
\newblock \emph{ArXiv e-prints}.

\bibitem[{Arachchige et~al.(2019)Arachchige, Bertok, Khalil, Liu, Camtepe, and
  Atiquzzaman}]{arachchige2019local}
Arachchige, P. C.~M.; Bertok, P.; Khalil, I.; Liu, D.; Camtepe, S.; and
  Atiquzzaman, M. 2019.
\newblock Local differential privacy for deep learning.
\newblock \emph{IEEE Internet of Things Journal}, 7(7): 5827--5842.

\bibitem[{Azure(2021)}]{minthigpen}
Azure. 2021.
\newblock Model Interpretability (preview) - azure machine learning.
\newblock \textit{Available at
  \url{https://aka.ms/AzureMLModelInterpretability}}.

\bibitem[{Bassily and Smith(2015)}]{bassily2015local}
Bassily, R.; and Smith, A. 2015.
\newblock Local, private, efficient protocols for succinct histograms.
\newblock In \emph{ACM Symposium on Theory of Computing}, 127--135.

\bibitem[{Bluemix(2021)}]{ibm}
Bluemix. 2021.
\newblock Ai explainability 360.
\newblock \textit{Available at \url{https://aix360.mybluemix.net/}}.

\bibitem[{Clopper and Pearson(1934)}]{clopper1934use}
Clopper, C.; and Pearson, E. 1934.
\newblock The use of confidence or fiducial limits illustrated in the case of
  the binomial.
\newblock \emph{Biometrika}, 26(4): 404--413.

\bibitem[{Cohen, Rosenfeld, and Kolter(2019)}]{cohen2019certified}
Cohen, J.; Rosenfeld, E.; and Kolter, Z. 2019.
\newblock Certified adversarial robustness via randomized smoothing.
\newblock In \emph{International Conference on Machine Learning}, 1310--1320.

\bibitem[{Community(2018)}]{fico}
Community, F. 2018.
\newblock Explainable Machine Learning Challenge.
\newblock
  \url{https://community.fico.com/s/explainable-machine-learning-challenge}.
\newblock Accessed: 2022-06-03.

\bibitem[{Diamond and Boyd(2016)}]{diamond2016cvxpy}
Diamond, S.; and Boyd, S. 2016.
\newblock {CVXPY}: {A} {P}ython-embedded modeling language for convex
  optimization.
\newblock \emph{Journal of Machine Learning Research}, 17(83): 1--5.

\bibitem[{Duchi and Rogers(2019)}]{duchi2019lower}
Duchi, J.; and Rogers, R. 2019.
\newblock Lower bounds for locally private estimation via communication
  complexity.
\newblock In \emph{COLT}, 1161--1191.

\bibitem[{Duchi, Jordan, and Wainwright(2018)}]{duchi2018minimax}
Duchi, J.~C.; Jordan, M.~I.; and Wainwright, M.~J. 2018.
\newblock Minimax optimal procedures for locally private estimation.
\newblock \emph{Journal of the American Statistical Association}, 113(521):
  182--201.

\bibitem[{Erlingsson, Pihur, and Korolova(2014)}]{erlingsson2014rappor}
Erlingsson, U.; Pihur, V.; and Korolova, A. 2014.
\newblock Rappor: Randomized aggregatable privacy-preserving ordinal response.
\newblock In \emph{Proceedings of the 2014 {ACM SIGSAC CCS}}, 1054--1067.

\bibitem[{Ghorbani, Abid, and Zou(2019)}]{ghorbani2019interpretation}
Ghorbani, A.; Abid, A.; and Zou, J. 2019.
\newblock Interpretation of neural networks is fragile.
\newblock \emph{Proceedings of the AAAI conference on artificial intelligence},
  33(01): 3681--3688.

\bibitem[{Heo, Joo, and Moon(2019)}]{heo2019fooling}
Heo, J.; Joo, S.; and Moon, T. 2019.
\newblock Fooling neural network interpretations via adversarial model
  manipulation.
\newblock \emph{Advances in Neural Information Processing Systems}, 32.

\bibitem[{Horv{\'a}th et~al.(2022)Horv{\'a}th, Mueller, Fischer, and
  Vechev}]{horvath2022boosting}
Horv{\'a}th, M.; Mueller, M.; Fischer, M.; and Vechev, M. 2022.
\newblock Boosting Randomized Smoothing with Variance Reduced Classifiers.
\newblock In \emph{International Conference on Learning Representations}.

\bibitem[{Jia, Cao, and Gong(2021)}]{jia2020intrinsic}
Jia, J.; Cao, X.; and Gong, N.~Z. 2021.
\newblock Intrinsic certified robustness of bagging against data poisoning
  attacks.
\newblock \emph{AAAI}.

\bibitem[{Kairouz, Bonawitz, and Ramage(2016)}]{kairouz2016discrete}
Kairouz, P.; Bonawitz, K.; and Ramage, D. 2016.
\newblock Discrete distribution estimation under local privacy.
\newblock In \emph{International Conference on Machine Learning}, 2436--2444.

\bibitem[{Kingma and Ba(2014)}]{kingma2014adam}
Kingma, D.~P.; and Ba, J. 2014.
\newblock Adam: A method for stochastic optimization.
\newblock \emph{arXiv preprint arXiv:1412.6980}.

\bibitem[{Kumar et~al.(2018)Kumar, Xiaosong, Khan, Kumar, and
  Ahad}]{kumar2018effective}
Kumar, R.; Xiaosong, Z.; Khan, R.~U.; Kumar, J.; and Ahad, I. 2018.
\newblock Effective and explainable detection of android malware based on
  machine learning algorithms.
\newblock In \emph{Proceedings of the 2018 International Conference on
  Computing and Artificial Intelligence}, 35--40.

\bibitem[{Lecuyer et~al.(2019)Lecuyer, Atlidakis, Geambasu, Hsu, and
  Jana}]{lecuyer2019certified}
Lecuyer, M.; Atlidakis, V.; Geambasu, R.; Hsu, D.; and Jana, S. 2019.
\newblock Certified robustness to adversarial examples with differential
  privacy.
\newblock In \emph{2019 IEEE Symposium on Security and Privacy (SP)}, 656--672.

\bibitem[{Liu, Dolan-Gavitt, and Garg(2018)}]{liu2018fine}
Liu, K.; Dolan-Gavitt, B.; and Garg, S. 2018.
\newblock Fine-pruning: Defending against backdooring attacks on deep neural
  networks.
\newblock In \emph{International Symposium on Research in Attacks, Intrusions,
  and Defenses}, 273--294. Springer.

\bibitem[{Lundberg and Lee(2017)}]{lundberg2017unified}
Lundberg, S.~M.; and Lee, S.-I. 2017.
\newblock A unified approach to interpreting model predictions.
\newblock In \emph{Proceedings of the 31st international conference on neural
  information processing systems}, 4768--4777.

\bibitem[{Lyu et~al.(2020)Lyu, Li, He, and Xiao}]{lyu2020towards}
Lyu, L.; Li, Y.; He, X.; and Xiao, T. 2020.
\newblock Towards differentially private text representations.
\newblock In \emph{Proceedings of the 43rd International ACM SIGIR Conference
  on Research and Development in Information Retrieval}, 1813--1816.

\bibitem[{Milli et~al.(2019)Milli, Schmidt, Dragan, and Hardt}]{milli2019model}
Milli, S.; Schmidt, L.; Dragan, A.~D.; and Hardt, M. 2019.
\newblock Model reconstruction from model explanations.
\newblock In \emph{Proceedings of the Conference on Fairness, Accountability,
  and Transparency}, 1--9.

\bibitem[{Miura, Hasegawa, and Shibahara(2021)}]{miura2021megex}
Miura, T.; Hasegawa, S.; and Shibahara, T. 2021.
\newblock MEGEX: Data-Free Model Extraction Attack against Gradient-Based
  Explainable AI.
\newblock \emph{arXiv preprint arXiv:2107.08909}.

\bibitem[{Neyman and Pearson(1933)}]{neyman1933ix}
Neyman, J.; and Pearson, E. 1933.
\newblock IX. On the problem of the most efficient tests of statistical
  hypotheses.
\newblock \emph{Philosophical Transactions of the Royal Society of London.
  Series A, Containing Papers of a Mathematical or Physical Character},
  231(694-706): 289--337.

\bibitem[{Phan et~al.(2020)Phan, Thai, Hu, Jin, Sun, and
  Dou}]{phan2020scalable}
Phan, H.; Thai, M.; Hu, H.; Jin, R.; Sun, T.; and Dou, D. 2020.
\newblock Scalable differential privacy with certified robustness in
  adversarial learning.
\newblock In \emph{International Conference on Machine Learning}, 7683--7694.

\bibitem[{Phan et~al.(2019)Phan, Vu, Liu, Jin, Dou, Wu, and
  Thai}]{Phan2019HeterogeneousGM}
Phan, N.; Vu, M.~N.; Liu, Y.; Jin, R.; Dou, D.; Wu, X.; and Thai, M.~T. 2019.
\newblock Heterogeneous Gaussian mechanism: preserving differential privacy in
  deep learning with provable robustness.
\newblock In \emph{Proceedings of the 28th International Joint Conference on
  Artificial Intelligence}, 4753--4759.

\bibitem[{Ribeiro, Singh, and Guestrin(2016)}]{ribeiro2016should}
Ribeiro, M.~T.; Singh, S.; and Guestrin, C. 2016.
\newblock " Why should i trust you?" Explaining the predictions of any
  classifier.
\newblock In \emph{Proceedings of the 22nd ACM SIGKDD international conference
  on knowledge discovery and data mining}, 1135--1144.

\bibitem[{Selvaraju et~al.(2017)Selvaraju, Cogswell, Das, Vedantam, Parikh, and
  Batra}]{selvaraju2017grad}
Selvaraju, R.~R.; Cogswell, M.; Das, A.; Vedantam, R.; Parikh, D.; and Batra,
  D. 2017.
\newblock Grad-cam: Visual explanations from deep networks via gradient-based
  localization.
\newblock In \emph{Proceedings of the IEEE international conference on computer
  vision}, 618--626.

\bibitem[{Severi et~al.(2021)Severi, Meyer, Coull, and
  Oprea}]{severi2021explanation}
Severi, G.; Meyer, J.; Coull, S.; and Oprea, A. 2021.
\newblock Explanation-Guided Backdoor Poisoning Attacks Against Malware
  Classifiers.
\newblock In \emph{30th $\{$USENIX$\}$ Security Symposium ($\{$USENIX$\}$
  Security 21)}.

\bibitem[{Shokri, Strobel, and Zick(2021)}]{shokri2021privacy}
Shokri, R.; Strobel, M.; and Zick, Y. 2021.
\newblock On the privacy risks of model explanations.
\newblock In \emph{Proceedings of the 2021 AAAI/ACM Conference on AI, Ethics,
  and Society}, 231--241.

\bibitem[{Shrikumar, Greenside, and Kundaje(2017)}]{shrikumar2017learning}
Shrikumar, A.; Greenside, P.; and Kundaje, A. 2017.
\newblock Learning important features through propagating activation
  differences.
\newblock In \emph{International Conference on Machine Learning}, 3145--3153.
  PMLR.

\bibitem[{Slack et~al.(2020)Slack, Hilgard, Jia, Singh, and
  Lakkaraju}]{slack2020fooling}
Slack, D.; Hilgard, S.; Jia, E.; Singh, S.; and Lakkaraju, H. 2020.
\newblock Fooling lime and shap: Adversarial attacks on post hoc explanation
  methods.
\newblock In \emph{Proceedings of the AAAI/ACM Conference on AI, Ethics, and
  Society}, 180--186.

\bibitem[{Smutz and Stavrou(2012)}]{smutz2012malicious}
Smutz, C.; and Stavrou, A. 2012.
\newblock Malicious PDF detection using metadata and structural features.
\newblock In \emph{Proceedings of the 28th annual computer security
  applications conference}, 239--248.

\bibitem[{Sun, Qian, and Chen(2021)}]{sun2020ldp}
Sun, L.; Qian, J.; and Chen, X. 2021.
\newblock {LDP-FL}: Practical private aggregation in federated learning with
  local differential privacy.
\newblock \emph{IJCAI}.

\bibitem[{Sundararajan, Taly, and Yan(2017)}]{sundararajan2017axiomatic}
Sundararajan, M.; Taly, A.; and Yan, Q. 2017.
\newblock Axiomatic attribution for deep networks.
\newblock In \emph{International Conference on Machine Learning}, 3319--3328.
  PMLR.

\bibitem[{Tran, Li, and Madry(2018)}]{tran2018spectral}
Tran, B.; Li, J.; and Madry, A. 2018.
\newblock Spectral signatures in backdoor attacks.
\newblock In \emph{Proceedings of the 32nd International Conference on Neural
  Information Processing Systems}, 8011--8021.

\bibitem[{Vu and Thai(2020)}]{vu2020pgm}
Vu, M.; and Thai, M. 2020.
\newblock PGM-Explainer: Probabilistic Graphical Model Explanations for Graph
  Neural Networks.
\newblock In \emph{34th Conference on Neural Information Processing Systems
  (NeurIPS 2020)}.

\bibitem[{Vu, Nguyen, and Thai(2022)}]{vu2022neucept}
Vu, M.~N.; Nguyen, T.; and Thai, M.~T. 2022.
\newblock NeuCEPT: Learn Neural Networks's Mechanism via Critical Neurons with
  Precision Guarantee.
\newblock In \emph{2022 IEEE International Conference on Data Mining (ICDM)}.
  IEEE.

\bibitem[{Vu et~al.(2021)Vu, Nguyen, Phan, Gera, and Thai}]{vu2021c}
Vu, M.~N.; Nguyen, T.~D.; Phan, N.; Gera, R.; and Thai, M.~T. 2021.
\newblock c-Eval: A unified metric to evaluate feature-based explanations via
  perturbation.
\newblock In \emph{2021 IEEE International Conference on Big Data (Big Data)},
  927--937. IEEE.

\bibitem[{Wang et~al.(2019{\natexlab{a}})Wang, Yao, Shan, Li, Viswanath, Zheng,
  and Zhao}]{wang2019neural}
Wang, B.; Yao, Y.; Shan, S.; Li, H.; Viswanath, B.; Zheng, H.; and Zhao, B.~Y.
  2019{\natexlab{a}}.
\newblock Neural cleanse: Identifying and mitigating backdoor attacks in neural
  networks.
\newblock In \emph{2019 IEEE Symposium on Security and Privacy (SP)}, 707--723.
  IEEE.

\bibitem[{Wang et~al.(2019{\natexlab{b}})Wang, Xiao, Yang, Zhao, Hui, Shin,
  Shin, and Yu}]{wang2019collecting}
Wang, N.; Xiao, X.; Yang, Y.; Zhao, J.; Hui, S.~C.; Shin, H.; Shin, J.; and Yu,
  G. 2019{\natexlab{b}}.
\newblock Collecting and analyzing multidimensional data with local
  differential privacy.
\newblock In \emph{IEEE ICDE}, 638--649.

\bibitem[{Wang et~al.(2017)Wang, Blocki, Li, and Jha}]{wang2017locally}
Wang, T.; Blocki, J.; Li, N.; and Jha, S. 2017.
\newblock Locally differentially private protocols for frequency estimation.
\newblock In \emph{26th USENIX Security Symposium}, 729--745.

\bibitem[{Wang et~al.(2019{\natexlab{c}})Wang, Ding, Xu et~al.}]{wang2019murs}
Wang, T.; Ding, B.; Xu, M.; et~al. 2019{\natexlab{c}}.
\newblock MURS: practical and robust privacy amplification with multi-party
  differential privacy.
\newblock In \emph{ACSAC}.

\bibitem[{Warner(1965)}]{warner1965randomized}
Warner, S.~L. 1965.
\newblock Randomized response: A survey technique for eliminating evasive
  answer bias.
\newblock \emph{Journal of the American Statistical Association}, 60(309):
  63--69.

\bibitem[{Xiong et~al.(2020)Xiong, Liu, Li, Cai, and
  Niu}]{xiong2020comprehensive}
Xiong, X.; Liu, S.; Li, D.; Cai, Z.; and Niu, X. 2020.
\newblock A comprehensive survey on local differential privacy.
\newblock \emph{Security and Communication Networks}.

\bibitem[{Ying et~al.(2019)Ying, Bourgeois, You, Zitnik, and
  Leskovec}]{ying2019gnnexplainer}
Ying, R.; Bourgeois, D.; You, J.; Zitnik, M.; and Leskovec, J. 2019.
\newblock Gnnexplainer: Generating explanations for graph neural networks.
\newblock \emph{Advances in neural information processing systems}, 32: 9240.

\bibitem[{Zhang et~al.(2020)Zhang, Wang, Shen, Ji, Luo, and
  Wang}]{zhang2020interpretable}
Zhang, X.; Wang, N.; Shen, H.; Ji, S.; Luo, X.; and Wang, T. 2020.
\newblock Interpretable deep learning under fire.
\newblock In \emph{29th $\{$USENIX$\}$ Security Symposium ($\{$USENIX$\}$
  Security 20)}.

\bibitem[{Zhao et~al.(2021)Zhao, Zhang, Xiao, and Lim}]{zhao2021exploiting}
Zhao, X.; Zhang, W.; Xiao, X.; and Lim, B. 2021.
\newblock Exploiting Explanations for Model Inversion Attacks.
\newblock In \emph{Proceedings of the IEEE/CVF International Conference on
  Computer Vision}, 682--692.

\bibitem[{Zhao et~al.(2020)Zhao, Zhao, Yang, Wang, Wang, Lyu, Niyato, and
  Lam}]{zhao2020local}
Zhao, Y.; Zhao, J.; Yang, M.; Wang, T.; Wang, N.; Lyu, L.; Niyato, D.; and Lam,
  K.~Y. 2020.
\newblock Local differential privacy based federated learning for Internet of
  Things.
\newblock \emph{IEEE Internet of Things Journal}.

\end{thebibliography}




\clearpage
\appendix

\section{Proof of Theorem 1}\label{app:proof1}
\begin{proof}
Let us denote $w_i$ as the  SHAP score of feature $i$ in the aggregated explanation. Given  the two explanations $w$ and   $\widetilde{w}$ that can be different at any feature and any possible output $z \in Range(\textsc{XRand})$, where $Range(\textsc{XRand})$ denotes every possible output of \textsc{XRand},
we have:
\begin{align}\label{proof-beta}
 \nonumber & \frac{P(\textsc{XRand}(w_i) = z )}{P(\textsc{XRand}(\widetilde{w}_i) = z )} \le \frac{\max P(\textsc{XRand}(w_i) = z )}{\min P(\textsc{XRand}(\widetilde{w}_i) = z )} \\
  \nonumber & = \frac{ \frac{\exp({\beta})}{\exp({\beta}) + \tau -1} }{  \min (\frac{\exp(-\Delta_{\calL} (i,j))}{ \sum_{t \in [k+1,k+\tau]} \exp(-\Delta_{\calL}(i,t))} \frac{\tau - 1}{\exp({\beta}) + \tau -1})  } \\
  &= \frac{ \exp(\beta ) }{(\tau -1) \min ( \frac{\exp(-\Delta_{\calL} (i,j))}{ \sum_{t \in [k+1,k+\tau]} \exp(-\Delta_{\calL}(i,t))} ) } \le \exp(\varepsilon_i)
\end{align}

Taking a natural logarithm of Eq. \ref{proof-beta}, we obtain:
\begin{align}  
  \nonumber &\nonumber \ln ( \frac{ \exp(\beta ) }{(\tau -1) \min ( \frac{\exp(-\Delta_{\calL_2} (i,j))}{ \sum_{t=k+1}^{k+\tau} \exp(-\Delta_{\calL_2}(i,t))} ) }  ) \le \ln (\exp(\varepsilon_i)) \\
 \nonumber  &\Leftrightarrow \beta \le \varepsilon_i + \ln(\tau-1) + \ln (\min  \frac{\exp(-\Delta_{\calL} (i,j))}{ \sum_{t=k+1}^{k+\tau} \exp(-\Delta_{\calL}(i,t))} )
\end{align}
Consequently, Theorem \ref{theorem-beta-bound} holds. 
\end{proof}

\section{A Primer on Certified Robustness}\label{app:cert}
 The ultimate goal of certified robustness is to guarantee consistency on the model performance under data perturbation. In specific, it has to ensure that a small perturbation in the input does not change the predicted label.
Given a benign example $x$, the robustness condition to $l_p(\mu)$-norm attacks can be stated as follows:
\begin{equation}
\forall \alpha \in l_p(\mu): f_i(x + \alpha) > \max_{j: j\neq i} f_{j}(x + \alpha) 
\label{RobustCond1}
\end{equation}
where $i$ is the ground-truth label of the sample $x$. The condition essentially indicates that a small perturbation $\alpha$ in the input does not change the predicted label $i$.

\paragraph{PixelDP \cite{lecuyer2019certified}.} To achieve the robustness condition in Eq. \ref{RobustCond1}, Lecuyer et al. \cite{lecuyer2019certified} introduce an algorithm, called \textbf{PixelDP}. By considering an input $x$ 
as a database in DP parlance, and individual features 
as tuples, PixelDP shows that randomizing the function $f(x)$ to enforce DP on a small number of features in the input sample  guarantees the robustness of predictions. To randomize $f(x)$, \textit{random noise} $\sigma_r$ is injected into either input $x$ or an arbitrary hidden layer, resulting in the following $(\epsilon_r, \delta_r)$-PixelDP condition: 
\begin{lemma} $(\epsilon_r, \delta_r)$-PixelDP \citep{lecuyer2019certified}. Given a randomized scoring function $f(x)$ satisfying $(\epsilon_r, \delta_r)$-PixelDP w.r.t. a $l_p$-norm metric, we have:
\begin{equation}
\forall i, \forall \alpha \in l_p(1): \mathbb{E} f_i(x) \leq e^{\epsilon_r} \mathbb{E} f_i(x + \alpha) + \delta_r
\label{PixelDPDef}
\end{equation}
where $\mathbb{E} f_i(x)$ is the expected value of $f_i(x)$, $\epsilon_r$ is a predefined budget, $\delta_r$ is a broken probability. 
\label{LemmaPixelDP} 
\end{lemma}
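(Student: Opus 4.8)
The plan is to derive this bound directly from the definition of $(\epsilon_r, \delta_r)$-differential privacy applied to the randomized scoring function $f$. The crucial observation is that, under the PixelDP viewpoint in which the input $x$ plays the role of a database and the $l_p$-norm measures neighborliness, any two inputs $x$ and $x + \alpha$ with $\alpha \in l_p(1)$ are adjacent. Hence the $(\epsilon_r, \delta_r)$-PixelDP guarantee yields, for every measurable output set $S$ in the range of $f$,
\begin{equation*}
\Pr[f(x) \in S] \le e^{\epsilon_r}\,\Pr[f(x+\alpha) \in S] + \delta_r .
\end{equation*}

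First I would exploit that each coordinate $f_i(x)$ of the randomized score is bounded in $[0,1]$, so its expectation admits the tail-integral (layer-cake) representation $\mathbb{E} f_i(x) = \int_0^1 \Pr[f_i(x) > t]\, dt$. For each fixed threshold $t$, the event $\{f_i(x) > t\}$ is one particular output set $S$, so the privacy inequality above applies verbatim and gives $\Pr[f_i(x) > t] \le e^{\epsilon_r}\,\Pr[f_i(x+\alpha) > t] + \delta_r$ for all $t$ simultaneously.

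Then I would integrate this pointwise bound over $t \in [0,1]$ and invoke linearity of the integral: the first term yields $e^{\epsilon_r}\int_0^1 \Pr[f_i(x+\alpha) > t]\,dt = e^{\epsilon_r}\,\mathbb{E} f_i(x+\alpha)$ by the same layer-cake identity, while the constant $\delta_r$ integrates to $\delta_r$ over the unit interval. Collecting the terms produces exactly $\mathbb{E} f_i(x) \le e^{\epsilon_r}\,\mathbb{E} f_i(x+\alpha) + \delta_r$; since the label $i$ and the perturbation $\alpha \in l_p(1)$ were arbitrary, the claim follows in full generality.

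The main obstacle is conceptual rather than computational: one must correctly translate the \emph{set-based} privacy guarantee (a statement about probabilities of output events) into a statement about \emph{expectations}. This hinges on two ingredients I would state explicitly -- the boundedness $f_i \in [0,1]$, which validates the layer-cake representation and keeps every integral finite, and the fact that the threshold events $\{f_i > t\}$ are legitimate output sets to which the DP inequality may be applied uniformly in $t$. Once these are in place, the remainder is just linearity of integration, so the proof is short.
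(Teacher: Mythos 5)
Your proposal is correct; note, however, that this paper gives no proof of the lemma at all --- it is quoted verbatim as background material from Lecuyer et al.\ (2019), the PixelDP paper. Your argument --- the layer-cake identity $\mathbb{E} f_i(x) = \int_0^1 \Pr[f_i(x) > t]\,dt$ for the bounded score $f_i \in [0,1]$, combined with applying the $(\epsilon_r,\delta_r)$-DP inequality to each threshold event $\{f_i > t\}$ and integrating --- is essentially the standard proof given in that original source, so your derivation matches the intended one.
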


At the prediction time, a certified robustness check is implemented for each prediction, as follows: 
\begin{equation}
\hat{\mathbb{E}}_{lb} f_i(x) > e^{2\epsilon_r} \max_{j: j\neq i} \hat{\mathbb{E}}_{ub} f_{j}(x) + (1 + e^{\epsilon_r})\delta_r 
\label{RobustCon2} 
\end{equation}
where $\hat{\mathbb{E}}_{lb}$ and $\hat{\mathbb{E}}_{ub}$ are the lower and upper bounds of the expected value $\hat{\mathbb{E}} f(x) = \frac{1}{n} \sum_n f(x)_n$, derived from the Monte Carlo estimation with an $\eta$-confidence, given $n$ is the number of invocations of $f(x)$ with independent draws in the noise $\sigma_r$.
Passing the check for a given input guarantees that no perturbation up to $l_p(1)$-norm (where $1$ is the radius of the $l_p$-norm ball) can change the model's prediction.

\paragraph{Boosting Randomized Smoothing \cite{horvath2022boosting}.}
Another state-of-the-art approach to certifying robustness is boosting randomized smoothing (RS).
In this scheme, an ensemble model $\bar{f}$ of $M$ classifiers trained on the same dataset with different random seeds (same structures and settings). We denote $p_1$ as the success probability that the ground-truth label $l$ is correctly predicted. Let $c_i$ and $c_{j, j \neq i}$ be the expected values of the clean model over the randomness of the training process, such as $c_i=\mathbb{E}_i(f_i(x))$ and $c_{j, j \neq i}=\mathbb{E}_{j}(f_j(x)) $. Let $t_i=\frac{c_i - c_j}{\sigma_i(M)}$ and $z_i$ be the probability distribution of the classification margin over class $i$, we have the following robustness condition 
\begin{align}
\nonumber    p_1 & := P\big(\bar{f}(x+\alpha)=i\big) \ge 1 - P\big(|z_i- (c_i -c_j)|\big) \\
    & \ge t_i \sigma_i(M) \ge 1 - \sum_{j, j\neq i} \frac{\sigma_j^2(M)}{ (c_i -c_j)^2}
    \label{p1 condition}
\end{align}
where $\sigma_i(M)$ is the variance of the classification margin $z_i$. 

It is shown in \cite{horvath2022boosting} that when $\sigma(M)$ decreases (i.e., a better certified robustness bound), the number of ensemble models $M$ increases, and the lower bound of the success probability $p_1$ approaches the ground-truth label $l$. 

Given the success probability $p_1$, confidence $\gamma$, number of samples $N$, and perturbation variance $\sigma_{\alpha}^2$ (up to an incorrect prediction), the probability distribution over the certified radius ($R$) is defined as follows:
\begin{align}
    P\Big(R = \sigma_{\alpha} \Phi^{-1} \big( \underline{p_1} (N_1,N,\alpha)  \big)  \Big) = \mathcal{B} (N_1,N,p_1) 
\end{align}
where $\Phi^{-1}$ denotes the inverse Gaussian CDF, $\mathcal{B} (N_1,N,p_1) $ is the probability of drawing $N_1$ successes in $N$ trials from a Binomial distribution with the success probability $p_1$ and $\underline{p_1}$  is the lower bound to the success probability of a Bernoulli experiment given $N_1$ success in $N$ trials with confidence $\alpha$ according to the Clopper-Pearson interval \cite{clopper1934use}. 

The certified robustness bound is $R^* = \arg\max R$ such that the condition in Eq.~\ref{p1 condition} is satisfied. 

\paragraph{Bagging ensemble learning.} For the certified robustness at the training-time against XBA, we leverage the bagging ensemble learning method-based certified robustness bounds, which have been demonstrated to be effective in defending against backdoor data poisoning attacks \citep{jia2020intrinsic}. The ensemble learning approach trains a set of base models and leverages a majority vote to quantify the difference between the lower bound of the class with the highest probability and the upper bound of the class with the second highest probability. Base upon that, one can identify the minimum number of poisoning data samples, called \textit{certified poisoning training size}, that can change the majority vote.

\section{Proof of Theorem \ref{certified poisoning training theory}}
\label{Proof of theorem certified training}

\begin{proof}

Recall that $\mathcal{D}$, $\mathcal{D}_o$, and  $\mathcal{D}'= \mathcal{D} \cup \mathcal{D}_o$ are the proprietary data, the outsourced data, and the poisoned training data, respectively. Given the model prediction on a data sample $x$ using $\mathcal{D}$, denoted as $f(\mathcal{D}, x)$, we ask a simple question: ``What is the minimum number poisoning data samples, i.e., certified poisoning training size $r_\mathcal{D}$, added into $\mathcal{D}$ to change the model prediction on $x$: $f(\mathcal{D}, x) \neq f(\mathcal{D}_+, x)$?" After adding $\mathcal{D}_o$ into $\mathcal{D}$, we ask the same question: ``What is the minimum number poisoning data samples, i.e., certified poisoning training size $r_{\mathcal{D}'}$, added into $\mathcal{D}' = \mathcal{D} \cup \mathcal{D}_o$ to change the model prediction on $x$: $f(\mathcal{D}', x) \neq f(\mathcal{D}'_{+}, x)$?" The difference between $r_\mathcal{D}$ and $r_{\mathcal{D}'}$ provides us a certified poisoning training size on $\mathcal{D}_o$. Intuitively, if $\mathcal{D}_o$ does not contain poisoning data examples, then $r_\mathcal{D}$ is expected to be relatively the same with $r_{\mathcal{D}'}$. Otherwise, $r_{\mathcal{D}'}$ can be significantly smaller than $r_{\mathcal{D}}$ indicating that $\mathcal{D}_o$ is heavily poisoned with at least $r =r_{\mathcal{D}} - r_{\mathcal{D}'} $ number of poisoning data samples towards opening backdoors on $x$. Now, our goal is to  find $r_\mathcal{D}$, $ r_{\mathcal{D}'}$, and their connection to $r$.

Let us denote two sets of poisoned training datasets with at most $r_{\mathcal{D}'}$ poisoned training samples in $\mathcal{D}'$ and at most $r_{\mathcal{D}}$ poisoned training samples in $\mathcal{D}$, namely $ B(\mathcal{D}',r_{\mathcal{D}'})$ and $ B(\mathcal{D},r_{\mathcal{D}})$, respectively,  as follows:
\begin{align} 
 & B(\mathcal{D},r_{\mathcal{D}})  = \{ \mathcal{D}_+ \text{ s.t. } |\mathcal{D}_+ | -|\mathcal{D}|  \le r_{\mathcal{D}} \} 
 \\
 & B(\mathcal{D}',r_{\mathcal{D}'})  = \{ \mathcal{D}'_+ \text{ s.t. } |\mathcal{D}'_+ | -|\mathcal{D}'|  \le r_{\mathcal{D}'} \} 
\end{align} 

We call $s(\mathcal{D})$ as a random subsample data that are sampled from $\mathcal{D}$ with replacement uniformly at random. 
We denote $p_l$ as the label probability, in which  $p_l = \Pr [ f(s(\mathcal{D}), x )=l) ]$ is the probability that the learned base model predicts label $l$ for $x$. The ensemble classifier $h$ predicts the label with the largest label probability for $x$, as:  
\begin{align}
    h(\mathcal{D}, x) = \arg\max_{l \in \{0,1\} } p_l
    \label{f(D,x)}
\end{align}

To prove the certified robustness of the mechanism $h(\mathcal{D}, x)$ against XBA, we need to find certified poisoning training size, which is the minimum number of poisoning training data  such that the ensemble classifier changes the prediction  for  $x$. Formally, we find the minimum  $r_{\mathcal{D}}$ such that the following inequality is satisfied for $\forall \mathcal{D}_+ \in  B(\mathcal{D},r)$:
\begin{equation}
    h(\mathcal{D}_+,x) \neq l \Leftrightarrow p'_l < p'_{\neg l}
\end{equation}
where $l \in \{0,1\}$ and $\neg l$ is the NOT operation of $l$ in the binary classification.

Finding exact values of $ p'_l$ and $p'_{\neg l}$ is difficult. Instead of that, we find the lower bound of $ p'_l$  and  upper bound of  $p'_{\neg l}$ ($l$ is the true label of $x$), we construct regions in the space $\Omega$, which is the joint space of  $X=s(\mathcal{D})$ and $Y = s(\mathcal{D}_+)$,  satisfying the conditions of the Neyman-Pearson Lemma \citep{neyman1933ix}. This enables us to derive the lower and upper bounds in these regions. Suppose we have a lower bound $\underline{p}_l$ of the largest label probability $p_l$ and an upper bound $\overline{p}_{\neg l}$ of the second largest label probability $p_{\neg l}$ when the classifier is trained on the clean training dataset. Formally,  $\underline{p}_l$ and $\overline{p}_{\neg l}$ satisfy:
\begin{equation}
   p_l \ge \underline{p}_l \ge \overline{p}_{\neg l} \ge p_{\neg l}
   \label{upper lower bounds}
\end{equation}

Following Theorem 1 in \citep{jia2020intrinsic}, we can have the following certified poisoning training size $r_{\mathcal{D}}$ as follows:

\textbf{Certified poisoning training size $r^*_{\mathcal{D}}$.}  Given a training dataset $\mathcal{D}$, a model $f(\cdot)$, and a testing sample $x$, the ensemble classifier $h$ is defined in Eq.~\ref{f(D,x)}. Suppose $l$ is the label with the largest probability predicted by $h$ for $x$ and $\neg l$ is the NOT operation of $l$ in the binary classification. We also have   $\underline{p}_l$ and $\overline{p}_{\neg l}$ satisfy Eq.~\ref{upper lower bounds}. The $h$ does not  predict the label $l$ for $x$ when the certified poisoning training size $r_{\mathcal{D}}$ is bounded by $r^*_{\mathcal{D}}$, i.e., we have: 
\begin{equation}
    h(\mathcal{D}_+, x) \neq l, \forall \mathcal{D}_+ \in B(\mathcal{D}, r^*_{\mathcal{D}} ), 
\end{equation}
where $r^*_{\mathcal{D}}$ is the solution to the following optimization problem:
\begin{equation}
     r^*_{\mathcal{D}}  = \arg \max_{ r_{\mathcal{D}}}  r_{\mathcal{D}}
     \label{r^*D}
\end{equation}
 s.t.
 \begin{align}
  \nonumber  & \max_{|\mathcal{D}|-r_{\mathcal{D}} \le |\mathcal{D}_+| \le |\mathcal{D}| + r_{\mathcal{D}}} (\frac{|\mathcal{D}_+|}{|\mathcal{D}|})^k - 2 \Big( \frac{\max(|\mathcal{D}, |\mathcal{D}_+|) -r_{\mathcal{D}} }{ |\mathcal{D}| } \Big)^k \\
    &+1 - (\underline{p}_l - \overline{p}_{\neg l} - \sigma_l - \sigma_{\neg l}) < 0
     \label{s.t.}
\end{align}

 where   $\sigma_l =\underline{p}_l- ( \lfloor{\underline{p}_l  n^k \rfloor} )/n^k $ and $\sigma_{\neg l} = \lceil{\overline{p}_{\neg l} n^k\rceil}/n^k -\overline{p}_{\neg l}$. 

Solving the problem in Eqs.~\ref{r^*D} and  in Eq.~\ref{s.t.} \citep{jia2020intrinsic}, we obtain the following results:
\begin{equation}
    r^*_{\mathcal{D}}  = \lceil{|\mathcal{D}| \Big( \sqrt[k]{1+(\underline{p}_l - \overline{p}_{\neg l} - \sigma_l - \sigma_{\neg l})}\Big) -1 \rceil}
\end{equation}

\textbf{Certified poisoning training size $r^*_{\mathcal{D}'}$.} Similar to find the certified poisoning training size $r^*_{\mathcal{D}}$, we obtain:

\begin{equation}
    r^*_{\mathcal{D}'}  = \lceil{|\mathcal{D}'| \Big( \sqrt[k]{1+(\underline{p}'_l - \overline{p}'_{\neg l} - \sigma'_l - \sigma'_{\neg l})}\Big) -1 \rceil}
\end{equation}
where  $\sigma'_l =\underline{p}'_l- ( \lfloor{\underline{p}'_l  n^{'k} \rfloor} )/n^{'k} $ and $\sigma'_{\neg l} = \lceil{\overline{p}'_{\neg l} n^{'k}\rceil}/n^{'k} -\overline{p}'_{\neg l}$. 

\textbf{Certified poisoning training size $r$.} 
 Intuitively, if $\mathcal{D}_o$ does not consist of poisoning data examples, then $r_\mathcal{D}$ is expected to be relatively the same with $r_{\mathcal{D}'}$. Otherwise, $r_{\mathcal{D}}$ can be significantly smaller than $r_{\mathcal{D}'}$ indicating that $\mathcal{D}_o$ is heavily poisoned with at least $r =   r_{\mathcal{D}} -  r_{\mathcal{D}'}  $ number of poisoning data samples towards opening backdoors on $x$. Therefore, after obtaining $r^*_{\mathcal{D}'}$ and $r^*_{\mathcal{D}}$, we have:

\begin{equation}
    r = r^*_{\mathcal{D}} - r^*_{\mathcal{D}'}
    \label{r-condition}
\end{equation}

\paragraph{Tightness of the certified poisoning training size $r$.} The bound in Theorem \ref{certified poisoning training theory} is tight and there is no existing smaller value of $r$ for the XBA to be successfully carried out. In our bound for the poisoning training size, the propriety data $\mathcal{D}$ cannot be changed by XBA; hence, $r^*_{\mathcal{D}}$ is fixed. In addition, $r_{\mathcal{D}'}^*$ is the minimum value that the prediction can be changed, so it is considered as a fixed value when the outsourced data $\mathcal{D}'$ remains the same, and it can be changed if the outsourced data is changed. For example, if  one more poisoning sample is added into $\mathcal{D}_o$, so $r_{\mathcal{D}'}^*$ becomes $r_{\mathcal{D}'}^*+1$, then the minimum value is also changed to be $r_{\mathcal{D}'}^*+1$. As a result, the certified robustness bound derived in Eq.~\ref{certified robustness bound} is tight.

\end{proof}

\section{Certified Robustness Bound using boosting randomized smoothing.}\label{app:pixeldp}
Directly applying the boosting RS (Appx. \ref{app:cert}) to \textsc{XRand} by \textit{only} using the noise $\alpha$ that is associated with $\Delta_{\alpha | w}$ may not be effective.
In the boosting RS, different base models (i.e., same structures and settings, just different random seeds when training), are trained on clean data; meanwhile, in \textsc{XRand}, the training data is backdoored data. As a result, the backdoored (and noisy) data in \textsc{XRand} along  with different base models in the boosting RS may increase  the variance of the outputs and then narrows down the certified robustness bound, which reduces the size of the bound. This results in a gap between the boosting RS and \textsc{XRand}.
 To close this gap, we add a smaller amount of noise, namely $\alpha_1$, into the input in addition to the noise $\alpha$ associated with $\Delta_{\alpha | w}$,
as follows:
\begin{align}
p_1 & := P\big(\bar{f}(x+\alpha + \alpha_1)=l\big)  \ge 1 - \frac{\sigma_{new}^2(M)}{ (c_l -c_{\neg l})^2}
    \label{p1 condition change}
\end{align}

Similarly, $\sigma^2_{new}$ is the variance of the classification margin, which is computed as in \cite{horvath2022boosting} associated with the new noise $(\alpha + \alpha_1)$.
Given the success probability $p_1$ in Eq.~\ref{p1 condition change} and the perturbation variance $\sigma_{\alpha + \alpha_1}^2$ (up to an incorrect prediction), the probability distribution over the certifiable radius $R$ is computed as follows:
\begin{align}
    P\Big(R = \sigma_{\alpha + \alpha_1} \Phi^{-1} \big( \underline{p_1} (N_1,N)  \big)  \Big) = \mathcal{B} (N_1,N,p_1) 
\end{align}
where $\mathcal{B} (N_1,N,p_1)$ is the probability of drawing $N_1$ successes in $N$ trials from a Binomial distribution with success probability $p_1$ and $\underline{p_1} (N_1,N) $ is the lower bound of the success probability of a  
Bernoulli experiment given $N_1$ successes in $N$ trials. 
The certified robustness bound is $R^* = \arg\max R$ such that the robustness condition in Eq.~\ref{p1 condition change} is satisfied.

\begin{figure}[t]
    \centering
    \includegraphics[width=1\linewidth]{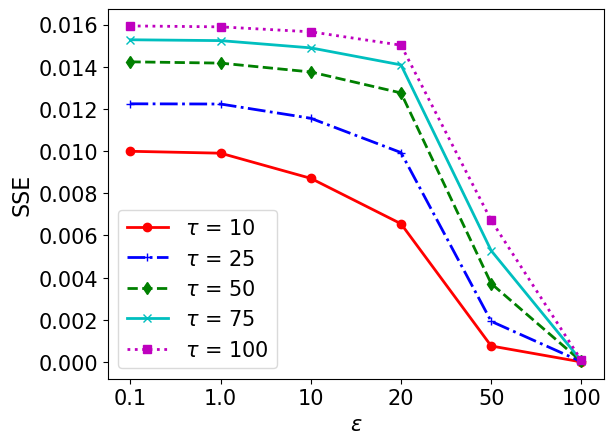}
    \caption{SSE values as a function of $\epsilon$ and $\tau$}
    \label{sse-eps-tau}
\end{figure}

\section{Experimental settings and results} \label{app:exp}

\paragraph{Platform.} Our experiments in this paper are implemented using Python 3.8 and conducted on a single GPU-assisted compute node that is installed with a Linux 64-bit operating system. The allocated resources include 8 CPU cores (AMD EPYC 7742 model) with 2 threads per core, and 100GB of RAM. The node is also equipped with 8 GPUs (NVIDIA DGX A100 SuperPod model), with 80GB of memory per GPU.

\paragraph{Dataset.} We conduct the XBA on malware classifiers against \textsc{XRand} explanations on three malware datasets. EMBER \citep{2018arXiv180404637A} is a representative benchmark dataset containing malicious and benign samples used for training malware classifiers. It consists of 2,351-dimensional feature vectors extracted from 1.1M Portable Executable (PE) files. The dataset contains $600,000$ training samples equally split between goodware and malware, and $200,000$ test samples, with the same class balance. 

We also test with malicious PDF files using the Contagio PDF dataset \cite{smutz2012malicious} which contains 10,000 PDF files equally split between goodware and malware, each sample is represented by a 135-dimensional feature vector. Finally, we evaluate our work on the Drebin dataset \cite{kumar2018effective} consisting of 5,560 malware and 123,453 goodware Android apps. Each sample contains 545,333 features extracted from the applications.

As mentioned in our threat model, we do not restrict the set of features that can be used by the attacker as backdoor triggers, so that our defense can be assessed against the strongest adversary.

\paragraph{Models.} For the EMBER dataset, we train two classifiers: LightGBM and EmberNN. LightGBM is a gradient boosting model released together with the EMBER dataset. It achieves good performance for malware binary classification with 98.61\% accuracy. Following Anderson et al. \citep{2018arXiv180404637A}, we use default parameters for training LightGBM (100 trees and 31 leaves per tree). EmberNN composes of four fully connected layers, in which the first three layers use ReLU activation functions, and the last layer uses a sigmoid activation function \citep{severi2021explanation}. EmberNN attains 99.14\% accuracy. 

\paragraph{Experimental results on Contagio and Drebin.}  We conduct additional experiments on the Contagio PDF \cite{smutz2012malicious} and Drebin \cite{kumar2018effective} datasets. The Contagio PDF dataset contains 10,000 PDF files equally split between goodware and malware, each sample is represented by a 135-dimensional feature vector. The Drebin dataset consists of 5,560 malware and 123,453 goodware Android apps. Each sample contains 545,333 features extracted from the applications. We use a Random Forest classifier for Contagio and a Linear Support Vector Machine for Drebin, and fix the trigger size to 30 features. The classifiers are released by (Severi et al, 2021) and we keep the same experimental settings. Figure \ref{fig:attpoison-contagio-debrin} shows the attack success rate of XBA using  the explanations returned by \textsc{XRand}.

On the Contagio and Drebin datasets, we observe similar behavior to our experiments with the EMBER dataset. The attack success rate decreases as we tighten the privacy budget, since the attacker has less access to the desired features. At 1\% poison rate and $\varepsilon = 10.0$, \textsc{XRand} can maintain a low success rate of 9.6\% in the Contagio dataset (Fig. \ref{fig:contagio}) and 7\% in the Drebin dataset (Fig. \ref{fig:debrin}).

\begin{figure}
    \centering
	\subfloat[Contagio]{
	    \includegraphics[width=0.45\linewidth]{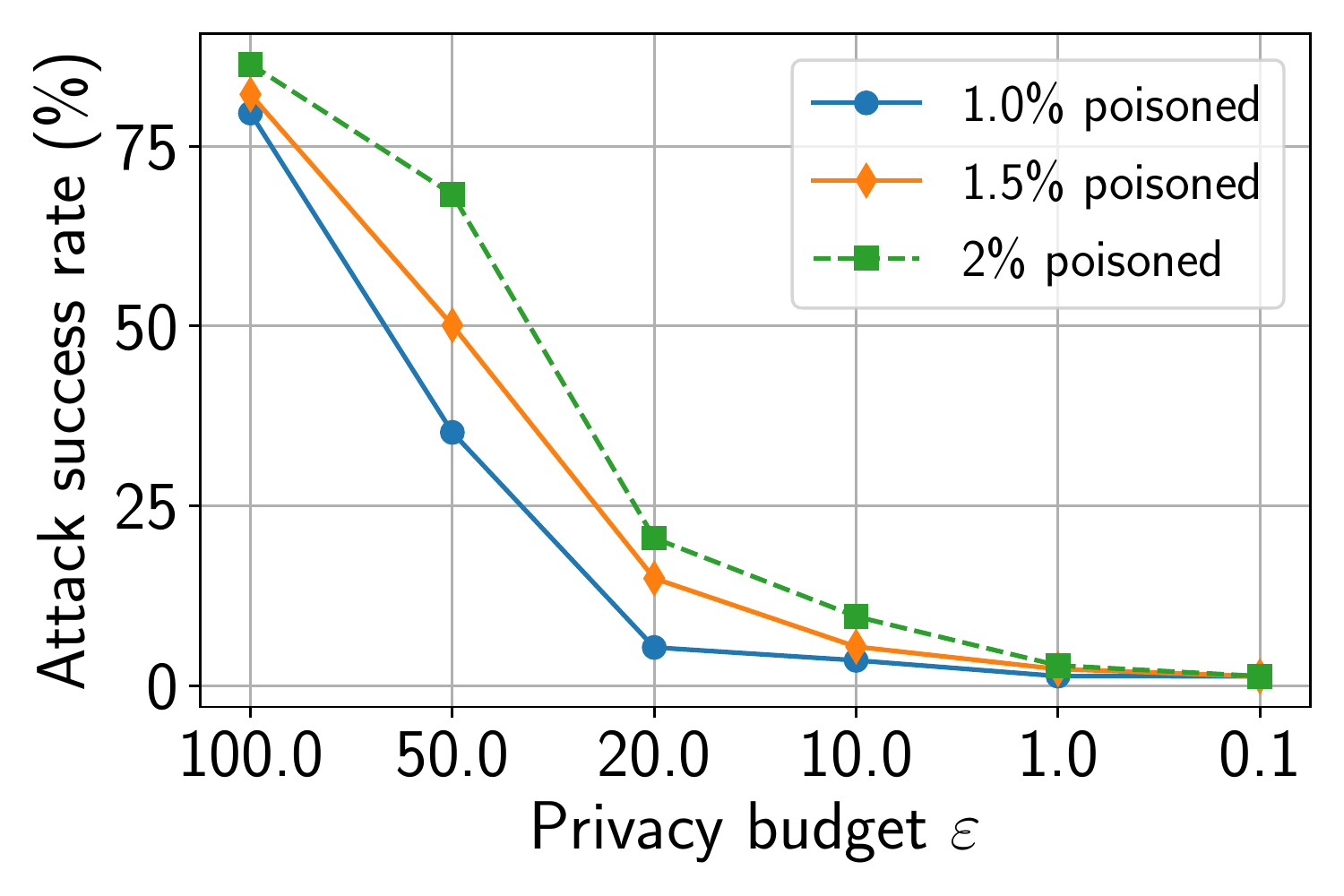}
    	\label{fig:contagio}
	}%
	\hfill
	\subfloat[Drebin]{
	    \includegraphics[width=0.45\linewidth]{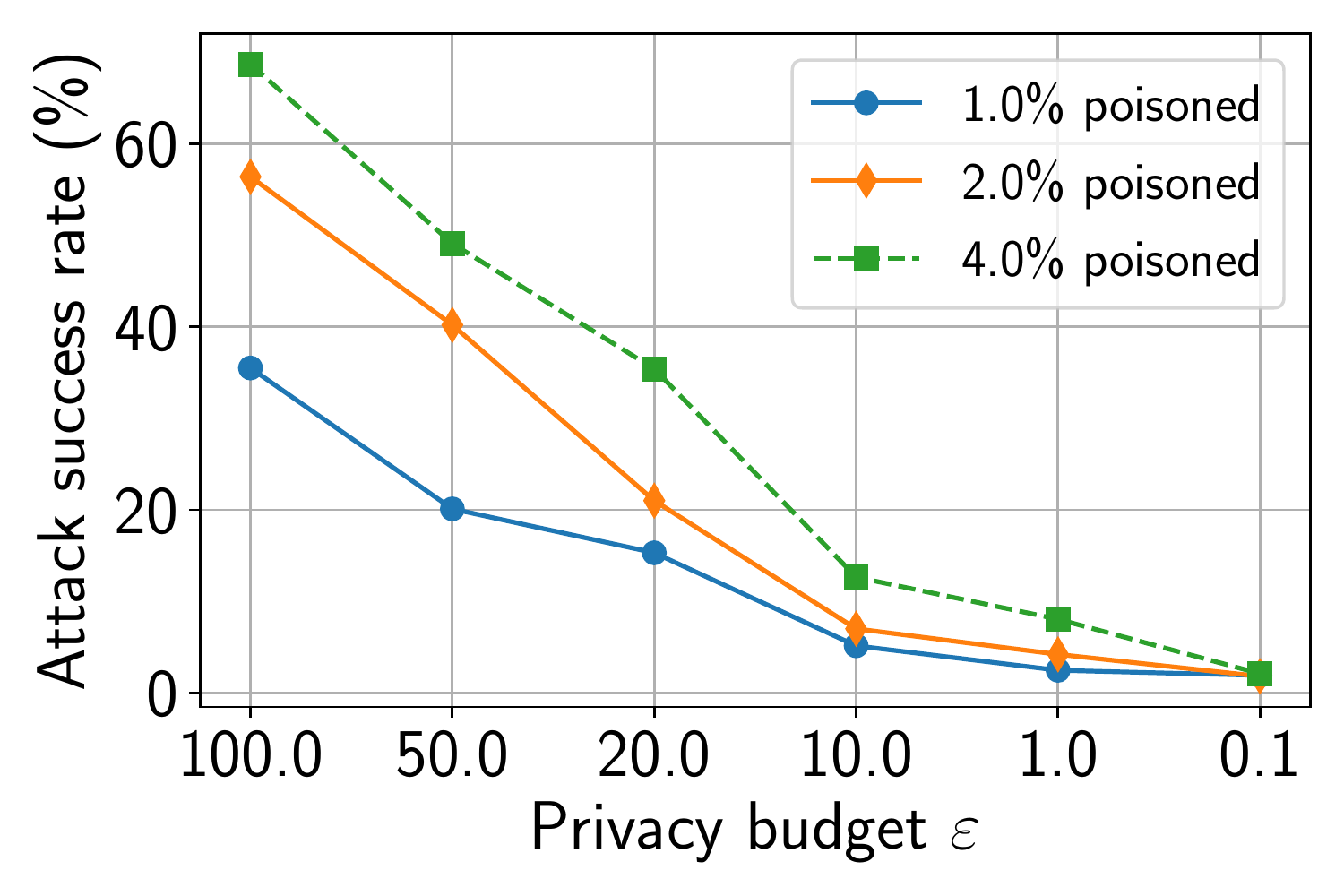}
	    \label{fig:debrin}
    }
    \caption{Attack success rate as a function of privacy budget $\varepsilon$ and the portion of poisoned samples on the Contagio PDF and Drebin datasets. The trigger size is fixed at 30.}
    \label{fig:attpoison-contagio-debrin}
\end{figure}

\begin{figure}
    \centering
    \includegraphics[width=0.8\linewidth]{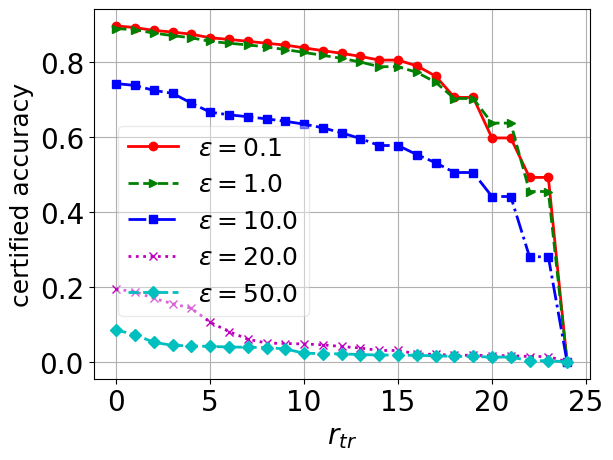}
    \caption{Certified robustness at the training time. The smaller privacy budget $\varepsilon$, the higher certified accuracy.}
    \label{fig:certified_bound_training}
\end{figure}

\paragraph{Certified Robustness.} To evaluate the certified robustness mechanism, as in \citep{lecuyer2019certified,jia2020intrinsic} we use the following metric:

{\small
\begin{equation}
 \text{certified accuracy} = \sum_{n=1}^{|test|}\frac{isCorrect(\mathcal{X}_n)~\&~isRobust(\mathcal{X}_n)}{|test|}
\label{certified accuracy}
\end{equation}
}

\noindent where $|test|$ is the number of testing samples, $isCorrect(\cdot)$ returns $1$ if the model makes a correct prediction (else, returns $0$), and $isRobust(\cdot)$ returns $1$ if the robustness size is larger than a given attack size $L$ (else, returns $0$).
When running with PixelDP and Boosting RS, we adopt the hyperparameter settings from the papers \cite{lecuyer2019certified} and \cite{horvath2022boosting}, respectively.

To verify the certified robustness at the training time, we conduct experiments with a wide range of $\varepsilon \in[0.1, 100.0]$. In this setting, we create $6,000$ poisoned samples by adding the trigger that follows the two-step LDP-preserving mechanism. From Fig. \ref{fig:certified_bound_training}, we observe that the smaller privacy budget $\varepsilon$, the higher certified accuracy. In fact, the smaller $\varepsilon$ imposes more noise which provides a better LDP guarantee and the model trained with noisier data is more robust against the backdoor attacks; hence, resulting in higher certified accuracy. It is obvious that the certified accuracy decreases when the threshold number of poisoning training samples $r_{tr}$ increases, since the term $isRobust(\cdot)$ decreases. 
For the inference-time bound, with the boosting RS certified robustness, we also conduct experiments with a wide range of privacy budget $\varepsilon \in[0.1, 50.0]$, the noise level $\sigma_{\alpha+\alpha_1} \in [0.25, 1.0]$, and the radius $r \in [0.25, 2.0]$. We consider $M = 5$ base classifiers in the ensemble model of the boosting RS. 
 With PixelDP, we conduct experiments given tight privacy budgets, i.e., $\varepsilon \in [0.1, 1.0]$. We obtain high  certified accuracy for these tight privacy budgets, i.e.,  $89.17\%$ and $90.42\%$ for $\varepsilon \in [0.1, 1.0]$, compared with $50\%$ obtained with the boosting RS. Here, we are dealing with a binary classification problem; therefore, the variance of the  model output is smaller than a multi-class classification problem.  That potentially  affects the effectiveness of the boosting RS.

\section{Visualizing \textsc{XRand}} \label{app:viz}

Figures \ref{fig:expcmp0}, \ref{fig:expcmp1}, and \ref{fig:expcmp4} visualize the explanations of some test samples before and after applying \textsc{XRand}. For the most part, the explanations from SHAP and \textsc{XRand} largely resemble one another.

\begin{figure*}
    \centering
	\subfloat[SHAP explanation (without \textsc{XRand})]{
	    \includegraphics[width=\linewidth]{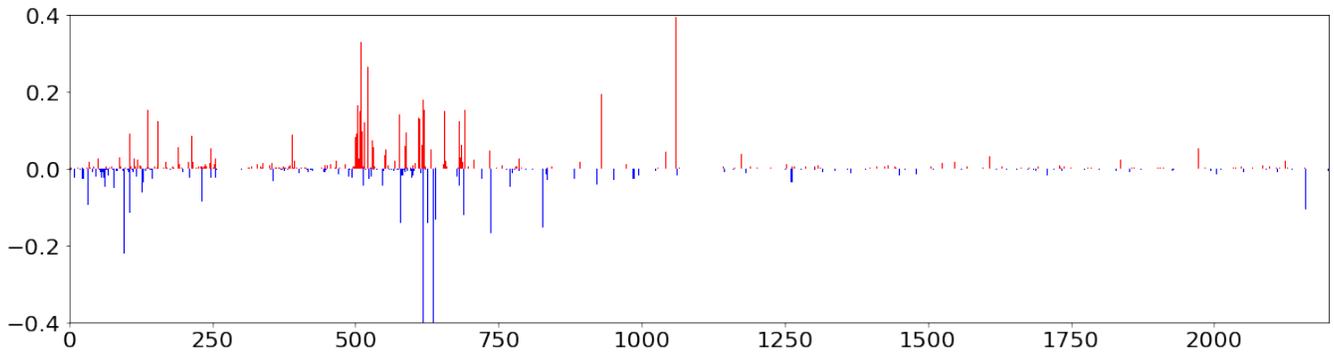}
	}%
	\hfill
	\subfloat[\textsc{XRand} explanation]{
	    \includegraphics[width=\linewidth]{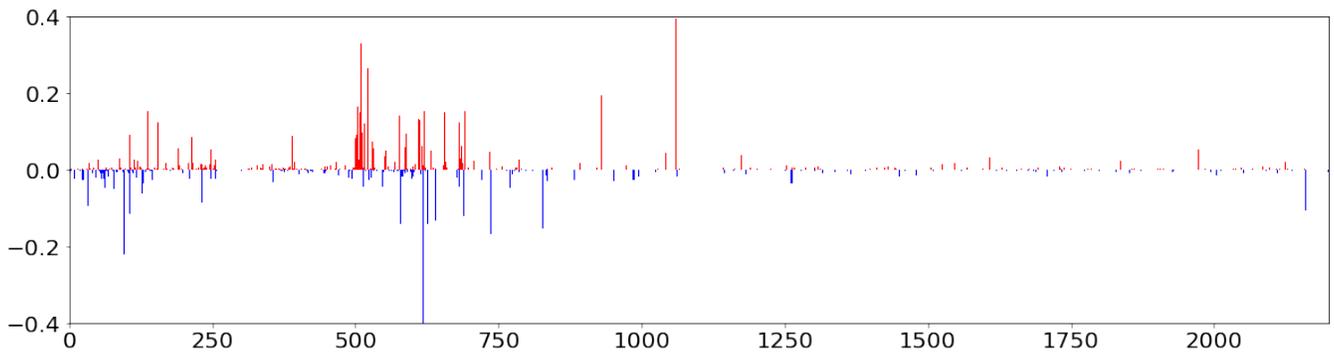}
    }
    \caption{Visualizing the SHAP explanation and our \textsc{XRand} explanation of test sample 1. The plot shows the SHAP value of each feature. The red vertical lines represent positive SHAP values that indicate malware-oriented features, while the blue vertical lines are negative SHAP values indicating goodware-oriented features.}
    \label{fig:expcmp0}
\end{figure*}

\begin{figure*}
    \centering
	\subfloat[SHAP explanation (without \textsc{XRand})]{
	    \includegraphics[width=\linewidth]{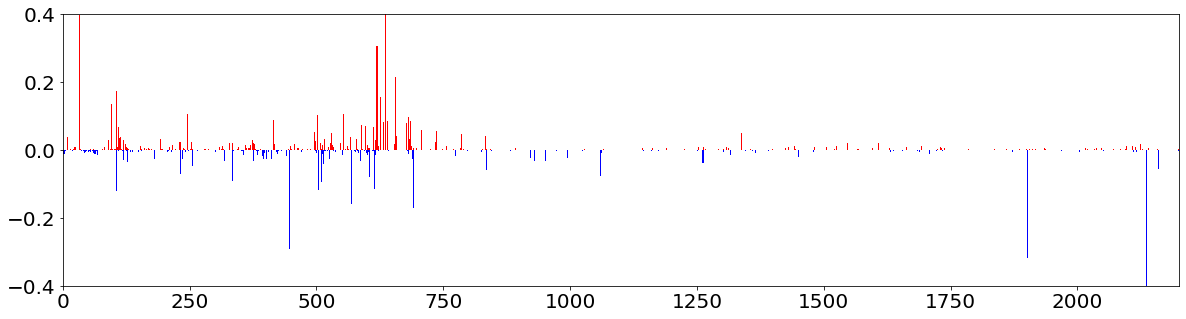}
	}%
	\hfill
	\subfloat[\textsc{XRand} explanation]{
	    \includegraphics[width=\linewidth]{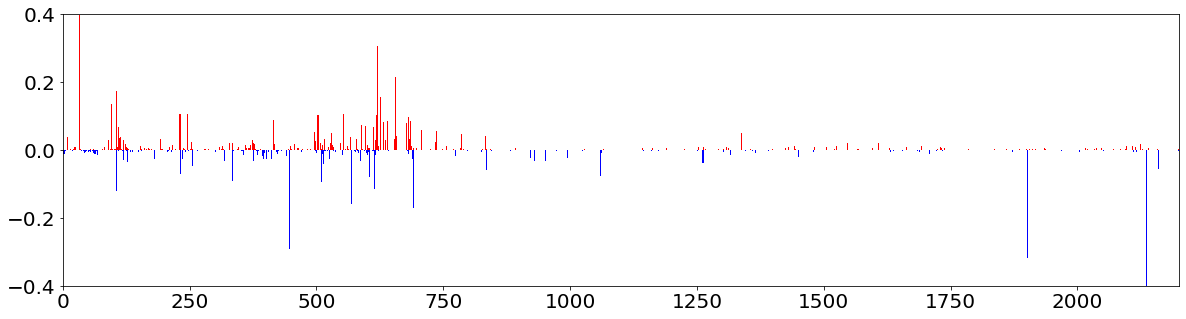}
    }
    \caption{Visualizing the SHAP explanation and our \textsc{XRand} explanation of test sample 2.}
    \label{fig:expcmp1}
\end{figure*}



\begin{figure*}
    \centering
	\subfloat[SHAP explanation (without \textsc{XRand})]{
	    \includegraphics[width=\linewidth]{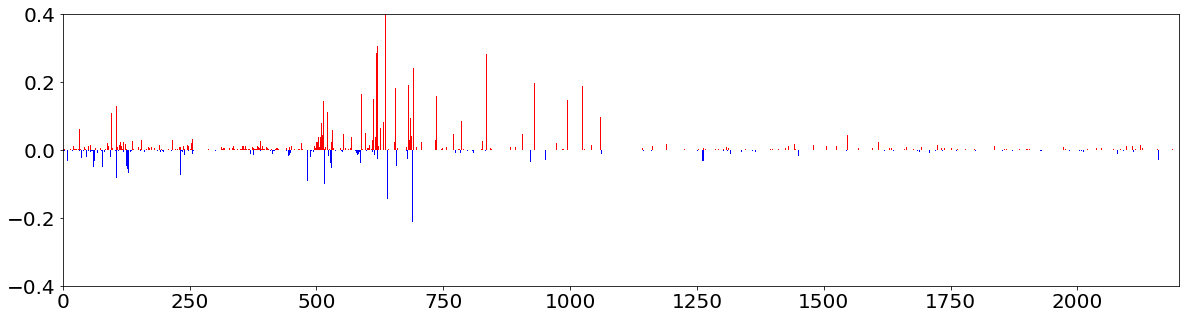}
	}%
	\hfill
	\subfloat[\textsc{XRand} explanation]{
	    \includegraphics[width=\linewidth]{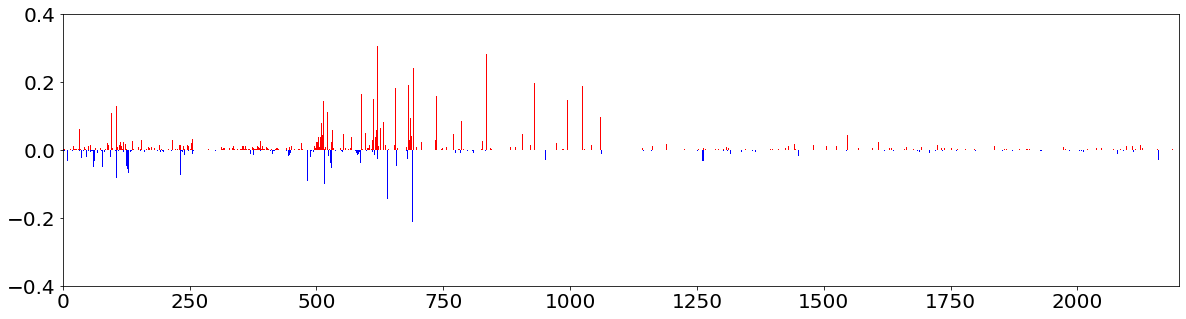}
    }
    \caption{Visualizing the SHAP explanation and our \textsc{XRand} explanation of test sample 3.}
    \label{fig:expcmp4}
\end{figure*}

\end{document}